\documentclass{article}


\usepackage[final]{nips_2018}
\PassOptionsToPackage{sort&compress}{natbib}
\bibliographystyle{unsrtnat}
\setcitestyle{numbers,square,comma,sort}



\usepackage[utf8]{inputenc} 
\usepackage[T1]{fontenc}    
\usepackage{hyperref}       
\usepackage{url}            
\usepackage{graphicx}
\usepackage{booktabs}       
\usepackage{amsfonts}       
\usepackage{nicefrac}       
\usepackage{microtype}      
\usepackage{amsthm}
\usepackage{algorithm}
\usepackage{algorithmic}
\usepackage{amssymb}
\usepackage{mathtools}
\usepackage{tabularx}

\usepackage{float}
\usepackage{subfig}
\usepackage{array}
\usepackage{enumitem}
\usepackage[font=small,labelfont=bf]{caption}
\usepackage{amsmath}
\usepackage{color}
\usepackage{wrapfig}

\usepackage[toc,titletoc,page]{appendix}
\let\appendixpagenameorig\appendixpagename
\renewcommand{\appendixpagename}{\Large\appendixpagenameorig}

\usepackage{setspace}
\AtBeginDocument{%
  \addtolength\abovedisplayskip{-0.2\baselineskip}%
  \addtolength\belowdisplayskip{-0.2\baselineskip}%
} 

\usepackage{titlesec}
\titlespacing\subsection{0pt}{4pt plus 2pt minus 2pt}{3pt plus 2pt minus 2pt}

\newtheorem{theorem}{Theorem}
\newtheorem{definition}{Definition}

\newtheorem{lemma}{Lemma}
\newtheorem{proposition}{Proposition}
\makeatletter
\renewenvironment{proof}[1][\proofname]{\par
  \vspace{-\topsep}
  \pushQED{\qed}%
  \normalfont
  \topsep0pt \partopsep0pt 
  \trivlist
  \item[\hskip\labelsep
        \itshape
    #1\@addpunct{.}]\ignorespaces
}{%
  \popQED\endtrivlist\@endpefalse
}
\makeatother




\title{Link Prediction Based on Graph Neural Networks}

%

\author{
  Muhan Zhang\\
  Department of CSE\\
  Washington University in St. Louis\\
  \texttt{muhan@wustl.edu} \\
  \And
  Yixin Chen \\
  Department of CSE\\
  Washington University in St. Louis\\
  \texttt{chen@cse.wustl.edu} \\
}

\begin{document}

\maketitle

\begin{abstract}
Link prediction is a key problem for network-structured data. Link prediction heuristics use some score functions, such as common neighbors and Katz index, to measure the likelihood of links. They have obtained wide practical uses due to their simplicity, interpretability, and for some of them, scalability. However, every heuristic has a strong assumption on when two nodes are likely to link, which limits their effectiveness on networks where these assumptions fail. In this regard, a more reasonable way should be learning a suitable heuristic from a given network instead of using predefined ones. By extracting a local subgraph around each target link, we aim to learn a function mapping the subgraph patterns to link existence, thus automatically learning a ``heuristic'' that suits the current network. In this paper, we study this heuristic learning paradigm for link prediction. First, we develop a novel $\gamma$-decaying heuristic theory. The theory unifies a wide range of heuristics in a single framework, and proves that all these heuristics can be well approximated from local subgraphs. Our results show that local subgraphs reserve rich information related to link existence. Second, based on the $\gamma$-decaying theory, we propose a new method to learn heuristics from local subgraphs using a graph neural network (GNN). Its experimental results show unprecedented performance, working consistently well on a wide range of problems.

\end{abstract}
\section{Introduction}
Link prediction is to predict whether two nodes in a network are likely to have a link \cite{liben2007link}. Given the ubiquitous existence of networks, it has many applications such as friend recommendation \cite{adamic2003friends}, movie recommendation \cite{koren2009matrix}, knowledge graph completion \cite{nickel2016review}, and metabolic network reconstruction \cite{oyetunde2016boostgapfill}.

One class of simple yet effective approaches for link prediction is called heuristic methods. Heuristic methods compute some heuristic node similarity scores as the likelihood of links \cite{liben2007link,lu2011link}. Existing heuristics can be categorized based on the maximum hop of neighbors needed to calculate the score.
For example, common neighbors (CN) and preferential attachment (PA) \cite{barabasi1999emergence} are \textbf{first-order} heuristics, since they only involve the one-hop neighbors of two target nodes. Adamic-Adar (AA) and resource allocation (RA) \cite{zhou2009predicting} are \textbf{second-order} heuristics, as they are calculated from up to two-hop neighborhood of the target nodes. We define \textit{$h$-order heuristics} to be those heuristics which require knowing up to $h$-hop neighborhood of the target nodes. There are also some \textbf{high-order} heuristics which require knowing the entire network. Examples include Katz, rooted PageRank (PR) \cite{brin2012reprint}, and SimRank (SR) \cite{jeh2002simrank}. Table~\ref{heuristics} in Appendix \ref{appendix:features} summarizes eight popular heuristics.


Although working well in practice, heuristic methods have strong assumptions on when links may exist. For example, the common neighbor heuristic assumes that two nodes are more likely to connect if they have many common neighbors. This assumption may be correct in social networks, but is shown to fail in protein-protein interaction (PPI) networks -- two proteins sharing many common neighbors are actually less likely to interact \cite{kovacs2018network}.

In fact, the heuristics belong to a more generic class, namely \textit{graph structure features}. Graph structure features are those features located inside the observed node and edge structures of the network, which can be calculated directly from the graph. Since heuristics can be viewed as  predefined graph structure features, a natural idea is to automatically learn such features from the network. \citet{zhang2017weisfeiler} first studied this problem. They extract local enclosing subgraphs around links as the training data, and use a fully-connected neural network to learn which enclosing subgraphs correspond to link existence. Their method called Weisfeiler-Lehman Neural Machine (WLNM) has achieved state-of-the-art link prediction performance. The \textit{enclosing subgraph} for a node pair $(x,y)$ is the subgraph induced from the network by the union of $x$ and $y$'s neighbors up to $h$ hops. Figure \ref{f1} illustrates the 1-hop enclosing subgraphs for $(A,B)$ and $(C,D)$. These enclosing subgraphs are very informative for link prediction -- all first-order heuristics such as common neighbors can be directly calculated from the 1-hop enclosing subgraphs. 


However, it is shown that high-order heuristics such as rooted PageRank and Katz often have much better performance than first and second-order ones \cite{lu2011link}. To effectively learn good high-order features, it seems that we need a very large hop number $h$ so that the enclosing subgraph becomes the entire network. This results in unaffordable time and memory consumption for most practical networks. But do we really need such a large $h$ to learn high-order heuristics? 

Fortunately, as our first contribution, we show that we do not necessarily need a very large $h$ to learn high-order graph structure features. We dive into the inherent mechanisms of link prediction heuristics, and find that most high-order heuristics can be unified by a \textit{$\gamma$-decaying theory}. We prove that, under mild conditions, any $\gamma$-decaying heuristic can be effectively approximated from an $h$-hop enclosing subgraph, where the approximation error decreases at least exponentially with $h$. This means that we can safely use even a small $h$ to learn good high-order features. It also implies that the ``effective order'' of these high-order heuristics is not that high.



Based on our theoretical results, we propose a novel link prediction framework, SEAL, to learn general graph structure features from \textbf{local} enclosing subgraphs. SEAL fixes multiple drawbacks of WLNM. First, a graph neural network (GNN) \cite{bruna2013spectral,duvenaud2015convolutional,kipf2016semi,niepert2016learning,zhang2018end} is used to replace the fully-connected neural network in WLNM, which enables better graph feature learning ability. Second, SEAL permits learning from not only subgraph structures, but also latent and explicit node features, thus absorbing multiple types of information. We empirically verified its much improved performance.

Our contributions are summarized as follows. 1) We present a new theory for learning link prediction heuristics, justifying learning from \textbf{local} subgraphs instead of entire networks. 2) We propose SEAL, a novel link prediction framework based on GNN (illustrated in Figure \ref{f1}). SEAL outperforms all heuristic methods, latent feature methods, and recent network embedding methods by large margins. SEAL also outperforms the previous state-of-the-art method, WLNM.

\begin{figure}[tp]
\centering
\subfloat{\includegraphics[width=0.95\textwidth]{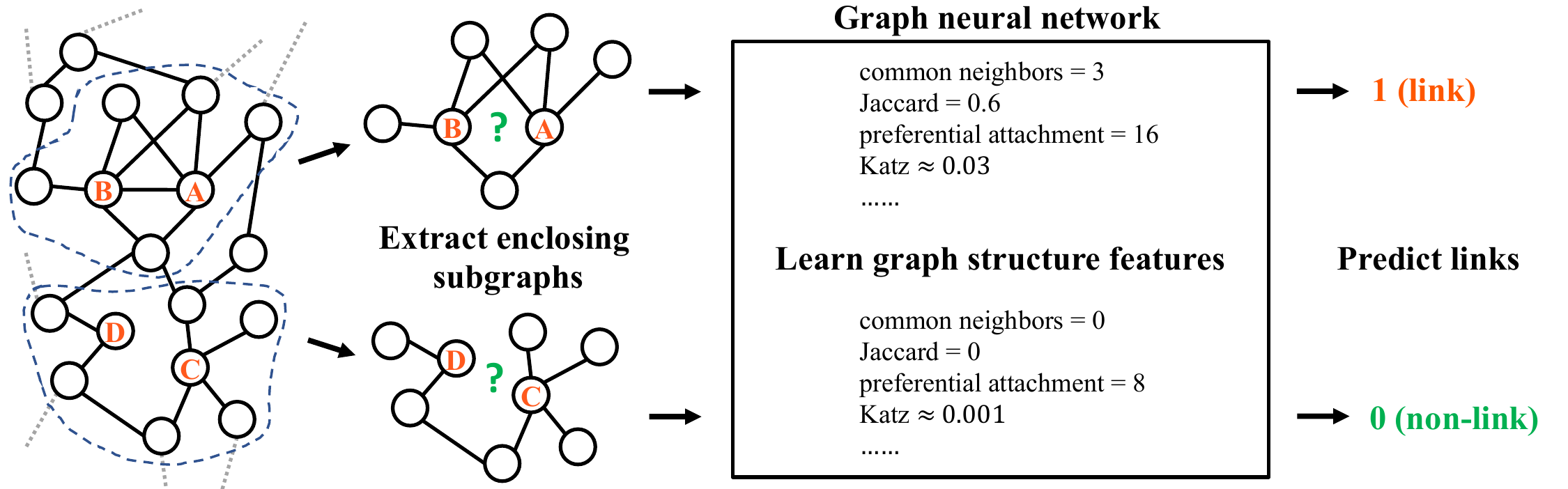}}
\caption{\small The SEAL framework. For each target link, SEAL extracts a local enclosing subgraph around it, and uses a GNN to learn general graph structure features for link prediction. Note that the heuristics listed inside the box are just for illustration -- the learned features may be completely different from existing heuristics.}
\label{f1}
\end{figure}

\section{Preliminaries}

\noindent \textbf{Notations~~}
Let $G = (V, E)$ be an undirected graph, where $V$ is the set of vertices and $E \subseteq V \times V$ is the set of observed links. Its adjacency matrix is $A$, where $A_{i,j} = 1$ if $(i,j) \in E$ and $A_{i,j} = 0$ otherwise. 
For any nodes $x, y \in V$, let $\Gamma(x)$ be the 1-hop neighbors of $x$, and $d(x,y)$ be the shortest path distance between $x$ and $y$. A walk $w = \langle v_0, \cdots, v_k \rangle$ is a sequence of nodes with $(v_i, v_{i+1}) \in E$. We use $|\langle v_0, \cdots, v_k \rangle|$ to denote the length of the walk $w$, which is $k$ here.

\noindent \textbf{Latent features and explicit features~~}
Besides graph structure features, latent features and explicit features are also studied for link prediction. \textbf{Latent feature methods} \cite{koren2009matrix,airoldi2008mixed,perozzi2014deepwalk,grover2016node2vec} factorize some matrix representations of the network to learn a low-dimensional latent representation/embedding for each node. Examples include matrix factorization \cite{koren2009matrix} and stochastic block model \cite{airoldi2008mixed} etc. Recently, a number of network embedding techniques have been proposed, such as DeepWalk \cite{perozzi2014deepwalk}, LINE \cite{tang2015line} and node2vec \cite{grover2016node2vec}, which are also latent feature methods since they implicitly factorize some matrices too \cite{qiu2017network}. 
\textbf{Explicit features} are often available in the form of node attributes, describing all kinds of side information about individual nodes. It is shown that combining graph structure features with latent features and explicit features can improve the performance \cite{nickel2014reducing,zhao2017leveraging}.

\noindent\textbf{Graph neural networks~~} Graph neural network (GNN) is a new type of neural network for learning over graphs \cite{bruna2013spectral,duvenaud2015convolutional,kipf2016semi,niepert2016learning,li2015gated,dai2016discriminative}). Here, we only briefly introduce the components of a GNN since this paper is not about GNN innovations but is a novel application of GNN. 
A GNN usually consists of 1) \textit{graph convolution layers} which extract local substructure features for individual nodes, and 2) a \textit{graph aggregation layer} which aggregates node-level features into a graph-level feature vector. Many graph convolution layers can be unified into a message passing framework \cite{gilmer2017neural}. 


\noindent\textbf{Supervised heuristic learning~~} 
There are some previous attempts to learn supervised heuristics for link prediction. The closest work to ours is the Weisfeiler-Lehman Neural Machine (WLNM) \cite{zhang2017weisfeiler}, which also learns from local subgraphs. However, WLNM has several drawbacks. Firstly, WLNM trains a fully-connected neural network on the subgraphs' adjacency matrices. Since fully-connected neural networks only accept fixed-size tensors as input, WLNM requires truncating different subgraphs to the same size, which may lose much structural information. Secondly, due to the limitation of adjacency matrix representations, WLNM cannot learn from latent or explicit features. Thirdly, theoretical justifications are also missing. We include more discussion on WLNM in Appendix \ref{appendix:baselines}.
Another related line of research is to train a supervised learning model on different heuristics' combination. For example, the path ranking algorithm \cite{lao2010relational} trains logistic regression on different path types' probabilities to predict relations in knowledge graphs. \citet{nickel2014reducing} propose to incorporate heuristic features into tensor factorization models. However, these models still rely on predefined heuristics -- they cannot learn general graph structure features.

\section{A theory for unifying link prediction heuristics} 
In this section, we aim to understand deeper the mechanisms behind various link prediction heuristics, and thus motivating the idea of learning heuristics from local subgraphs. Due to the large number of graph learning techniques, note that we are not concerned with the generalization error of a particular method, but focus on the information reserved in the subgraphs for calculating existing heuristics. 
\begin{definition} \textbf{(Enclosing subgraph)}
For a graph $G = (V, E)$, given two nodes $x, y \in V$, the $h$-hop enclosing subgraph for $(x,y)$ is the subgraph $G^h_{x,y}$ induced from $G$ by the set of nodes $\{~i~ |~ d(i,x) \leq h ~\text{or}~ d(i,y)\leq h ~\}$.
\end{definition}
\vspace{-5pt}




The enclosing subgraph describes the ``$h$-hop surrounding environment" of $(x,y)$. Since $G^h_{x,y}$ contains all $h$-hop neighbors of $x$ and $y$, we naturally have the following theorem. 

\begin{theorem}
Any $h$-order heuristic for $(x, y)$ can be accurately calculated from $G^h_{x,y}$.
\label{t1}
\end{theorem}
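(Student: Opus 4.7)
The plan is to prove this essentially by unpacking the definitions: an $h$-order heuristic, by the informal definition given in the introduction, is one whose value at $(x,y)$ depends only on the $h$-hop neighborhood of $x$ and $y$, while $G^h_{x,y}$ is the induced subgraph on exactly the set $N^h_{x,y} := \{i \mid d(i,x)\le h \text{ or } d(i,y)\le h\}$. So the strategy is to show that $G^h_{x,y}$ contains all the pieces of information any such heuristic could possibly inspect.

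The first step is to formalize ``$h$-order heuristic'' as a function $f$ of the input graph $G$ and the pair $(x,y)$ such that $f(G,x,y)$ is determined by the edges of $G$ among the nodes of $N^h_{x,y}$. This is the natural reading of ``requires knowing up to $h$-hop neighborhood of the target nodes.'' One can justify this formalization by checking the explicit $h$-order heuristics in Table~\ref{heuristics}: for instance, CN and PA use $\Gamma(x),\Gamma(y)$, both contained in $N^1_{x,y}$; AA and RA additionally use $|\Gamma(z)|$ for common neighbors $z$, and all neighbors of such $z$ are within $2$ hops of $x$ (or $y$), hence in $N^2_{x,y}$.

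The second step is to observe that computing distances from $x$ (or $y$) to depth $h$, and enumerating all edges between depth-$\le h$ vertices, can be done using only $G^h_{x,y}$. Concretely, a BFS from $x$ inside $G^h_{x,y}$ produces the same distance labels (up to $h$) as a BFS in $G$, because any shortest path of length $\le h$ from $x$ lies entirely inside $N^h_{x,y}$; and edges between two vertices of $N^h_{x,y}$ that are present in $G$ are, by definition of an induced subgraph, present in $G^h_{x,y}$. Thus every quantity that an $h$-order heuristic evaluates, such as neighborhood intersections, degrees of neighbors, or short path counts, takes identical values when computed on $G^h_{x,y}$ as on $G$. Applying this to $f$ gives $f(G,x,y)=f(G^h_{x,y},x,y)$, which is the theorem.

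The main obstacle is really just the first step: making ``$h$-order'' precise enough that the result is a genuine statement rather than a tautology, while staying faithful to the paper's informal usage. I would handle this by adopting the formalization above and, if space permits, noting that all heuristics the paper labels as $h$-order do satisfy it. Beyond that, every subsequent step is a routine consequence of the definition of an induced subgraph and of BFS distances, so no further technical machinery is needed.
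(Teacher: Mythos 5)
Your proposal is correct and matches the paper's treatment: the paper states Theorem~\ref{t1} without a formal proof, regarding it as immediate from the definition of $h$-order heuristics together with the observation that $G^h_{x,y}$ contains all $h$-hop neighbors of $x$ and $y$. Your extra care in formalizing what an $h$-order heuristic may depend on, and in checking that induced-subgraph edges and BFS distances up to depth $h$ are preserved, simply makes explicit what the paper leaves implicit.
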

For example, a $2$-hop enclosing subgraph will contain all the information needed to calculate any first and second-order heuristics. 
However, although first and second-order heuristics are well covered by local enclosing subgraphs, an extremely large $h$ seems to be still needed for learning high-order heuristics. Surprisingly, our following analysis shows that \textbf{learning high-order heuristics is also feasible with a small $h$}. We support this first by defining the $\gamma$-decaying heuristic. We will show that under certain conditions, a $\gamma$-decaying heuristic can be very well approximated from the $h$-hop enclosing subgraph. Moreover, we will show that almost all well-known high-order heuristics can be unified into this $\gamma$-decaying heuristic framework.


\begin{definition} \textbf{($\gamma$-decaying heuristic)}
A $\gamma$-decaying heuristic for $(x,y)$ has the following form:
\begin{equation}
\mathcal{H}(x,y) = \eta \sum_{l=1}^\infty \gamma^l f(x,y,l),
\end{equation}
where $\gamma$ is a decaying factor between 0 and 1, $\eta$ is a positive constant or a positive function of $\gamma$ that is upper bounded by a constant, $f$ is a nonnegative function of $x,y,l$ under the the given network.
\end{definition}

Next, we will show that under certain conditions, a $\gamma$-decaying heuristic can be approximated from an $h$-hop enclosing subgraph, and the approximation error decreases at least exponentially with $h$.

\begin{theorem}
Given a $\gamma$-decaying heuristic $\mathcal{H}(x,y) = \eta \sum_{l=1}^\infty \gamma^l f(x,y,l)$, if $f(x,y,l)$ satisfies:
\vspace{-5pt}
\begin{itemize}[leftmargin=*]
\item(property 1) $f(x,y,l) \leq \lambda^l$ where $\lambda < \frac{1}{\gamma}$; and
\vspace{-5pt}
\item(property 2) $f(x,y,l)$ is calculable from $G^h_{x,y}$ for $l=1,2,\cdots, g(h)$, where $g(h)\!=\!ah\! +\! b$ with $a,b \in \mathbb{N}$ and $a>0$, 
\end{itemize}
\vspace{-5pt}
then $\mathcal{H}(x,y)$ can be approximated from $G^h_{x,y}$ and the approximation error decreases at least exponentially with $h$.
\label{expdecay}
\end{theorem}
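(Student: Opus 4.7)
The plan is to define an approximation to $\mathcal{H}(x,y)$ by truncating its infinite series at the largest index for which $f(x,y,l)$ is still computable from $G^h_{x,y}$, and then to control the error by a geometric tail bound made possible by property 1.

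Concretely, I would set the truncated approximation to be
\begin{equation*}
\tilde{\mathcal{H}}(x,y) \;:=\; \eta \sum_{l=1}^{g(h)} \gamma^l f(x,y,l),
\end{equation*}
which by property 2 is computable purely from $G^h_{x,y}$ (and $\eta$, which is either a constant or a bounded function of $\gamma$). The remaining error is $\mathcal{H}(x,y) - \tilde{\mathcal{H}}(x,y) = \eta \sum_{l=g(h)+1}^{\infty} \gamma^l f(x,y,l)$, and since $f \geq 0$ this is exactly the quantity I need to bound.

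The heart of the argument is the tail bound. Using property 1, I can replace $f(x,y,l)$ by $\lambda^l$ termwise, obtaining a geometric series in the ratio $\gamma\lambda$. The assumption $\lambda < 1/\gamma$ guarantees $\gamma\lambda < 1$, so
\begin{equation*}
\sum_{l=g(h)+1}^{\infty} \gamma^l f(x,y,l) \;\leq\; \sum_{l=g(h)+1}^{\infty} (\gamma\lambda)^l \;=\; \frac{(\gamma\lambda)^{g(h)+1}}{1-\gamma\lambda}.
\end{equation*}
Substituting $g(h) = ah + b$ and folding all $h$-independent factors (including the constant bound on $\eta$) into a single constant $C$, the error becomes at most $C(\gamma\lambda)^{ah}$, which decays exponentially in $h$ since $\gamma\lambda < 1$ and $a > 0$.

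There is essentially no hard obstacle: the only subtlety is verifying that the two properties interact correctly, namely that property 1 provides the summability and property 2 provides a truncation index that grows linearly in $h$, so the exponent $ah + b$ rather than some slower function controls the decay rate. The non-trivial work in the surrounding theory (which I would not attempt here) is showing that concrete high-order heuristics such as Katz, rooted PageRank, and SimRank actually fit this framework with appropriate choices of $f$, $\gamma$, $\lambda$, $a$, $b$; presumably that is handled in subsequent propositions.
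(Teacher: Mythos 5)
Your proposal is correct and matches the paper's proof essentially verbatim: truncate at $g(h)$, use property 2 for computability and property 1 to bound the tail by the geometric series $\sum_{l=g(h)+1}^{\infty}(\gamma\lambda)^l = (\gamma\lambda)^{ah+b+1}(1-\gamma\lambda)^{-1}$, which decays exponentially in $h$. No differences worth noting.
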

\begin{proof}
We can approximate such a $\gamma$-decaying heuristic by summing over its first $g(h)$ terms.
\begin{equation}
\widetilde{\mathcal{H}}(x,y) := \eta \sum_{l=1}^{g(h)} \gamma^l f(x,y,l).
\end{equation}
The approximation error can be bounded as follows.
\[
|\mathcal{H}(x,y) - \widetilde{\mathcal{H}}(x,y) | = \eta \sum_{l=g(h)+1}^{\infty} \gamma^l f(x,y,l) \leq \eta \sum_{l=ah+b+1}^{\infty} \gamma^l \lambda^l = \eta (\gamma\lambda)^{ah+b+1} (1-\gamma\lambda)^{-1}. \qedhere
\]
\end{proof}
In practice, a small $\gamma\lambda$ and a large $a$ lead to a faster decreasing speed. Next we will prove that three popular high-order heuristics: Katz, rooted PageRank and SimRank, are all $\gamma$-decaying heuristics which satisfy the properties in Theorem \ref{expdecay}. First, we need the following lemma.
\begin{lemma}
Any walk between $x$ and $y$ with length $l \leq 2h+1$ is included in $G^h_{x,y}$.
\label{l1}
\end{lemma}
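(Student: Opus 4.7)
The plan is to unpack the definition of the induced enclosing subgraph $G^h_{x,y}$ and show that every vertex on such a walk lies within $h$ hops of either $x$ or $y$, which by induced-subgraph-ness automatically pulls in all the walk's edges.

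More concretely, I would let $w = \langle v_0, v_1, \ldots, v_l \rangle$ be any walk with $v_0 = x$, $v_l = y$, and $l \leq 2h+1$. The key observation is that the walk itself witnesses an upper bound on shortest-path distances: the prefix $\langle v_0, \ldots, v_i \rangle$ is a walk from $x$ to $v_i$ of length $i$, so $d(v_i, x) \leq i$; similarly the suffix gives $d(v_i, y) \leq l - i$.

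Next I would argue that for every index $i \in \{0, 1, \ldots, l\}$, at least one of the inequalities $i \leq h$ or $l - i \leq h$ must hold. Indeed, if both failed we would have $i \geq h+1$ and $l - i \geq h+1$, forcing $l \geq 2h + 2$, which contradicts $l \leq 2h+1$. Hence $d(v_i, x) \leq h$ or $d(v_i, y) \leq h$, placing $v_i$ in the vertex set that defines $G^h_{x,y}$. Since $G^h_{x,y}$ is the subgraph \emph{induced} by this vertex set, each edge $(v_i, v_{i+1})$ of the walk, whose endpoints both lie in the set, is present in $G^h_{x,y}$, so the whole walk is included.

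There is no real obstacle here; the lemma is essentially a pigeonhole on the index $i$ along the walk, combined with the definition of an induced subgraph. The only thing worth being careful about is covering the endpoints $i = 0$ and $i = l$ (trivially in the subgraph since $d(x,x) = 0 \leq h$ and $d(y,y) = 0 \leq h$) and making explicit that induced subgraphs retain all edges between included vertices, so that the conclusion applies to the walk as a subgraph, not merely to its vertex sequence.
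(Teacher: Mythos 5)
Your proof is correct and follows essentially the same argument as the paper's: both show that for each vertex $v_i$ on the walk, the constraint $l \leq 2h+1$ forces $d(v_i,x) \leq h$ or $d(v_i,y) \leq h$ (the paper phrases this as a contradiction on the sum of distances, you as a pigeonhole on the index $i$, but these are the same observation). Your additional remark that the induced subgraph retains the walk's edges is a welcome bit of explicitness the paper leaves implicit.
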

\begin{proof}
Given any walk $w = \langle x, v_1, \cdots, v_{l-1}, y \rangle$ with length $l$, we will show that every node $v_i$ is included in $G^h_{x,y}$. Consider any $v_i$. Assume $d(v_i,x) \geq h+1$ and $d(v_i,y) \geq h+1$. Then, $2h+1 \geq l = |\langle x, v_1,\cdots,v_{i}\rangle| + |\langle v_{i},\cdots,v_{l-1},y\rangle| \geq d(v_i,x) + d(v_i,y) \geq 2h+2$, a contradiction. Thus, $d(v_i,x) \le h$ or $d(v_i,y) \le h$. By the definition of $G^h_{x,y}$, $v_i$ must be included in $G^h_{x,y}$. 
\end{proof}
Next we will analyze Katz, rooted PageRank and SimRank one by one.

\subsection{Katz index}

The Katz index \cite{katz1953new} for $(x, y)$ is defined as
\begin{align}
\text{Katz}_{x,y} = \sum_{l = 1}^{\infty} \beta^l  {|\text{walks}^{\langle l\rangle}(x,y)|} = \sum_{l=1}^\infty \beta^l [A^l]_{x,y},
\label{katz1}
\end{align}
where $\text{walks}^{\langle l\rangle}(x,y)$ is the set of length-$l$ walks between $x$ and $y$, and $A^l$ is the $l^{\text{th}}$ power of the adjacency matrix of the network. Katz index sums over the collection of all walks between $x$ and $y$ where a walk of length $l$ is damped by $\beta^l$ ($0 < \beta < 1$), giving more weight to shorter walks.

Katz index is directly defined in the form of a $\gamma$-decaying heuristic with $\eta=1,\gamma=\beta$, and $f(x,y,l) = |\text{walks}^{\langle l\rangle}(x,y)|$. According to Lemma \ref{l1}, $|\text{walks}^{\langle l\rangle}(x,y)|$ is calculable from $G^h_{x,y}$ for $l\leq 2h+1$, thus property 2 in Theorem \ref{expdecay} is satisfied. Now we show when property 1 is satisfied.

\begin{proposition}
For any nodes $i,j$, $[A^l]_{i,j}$ is bounded by $d^{l}$, where $d$ is the maximum node degree of the network.
\label{l2}
\end{proposition}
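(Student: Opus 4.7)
The plan is to interpret $[A^l]_{i,j}$ combinatorially as the number of length-$l$ walks from $i$ to $j$, and then bound that count by $d^l$. Since every entry of $A^l$ counts walks of length $l$ between the corresponding endpoints, it suffices to observe that any walk of length $l$ starting at $i$ is determined by a sequence of $l$ edge choices, and each choice has at most $d$ options because every vertex has degree at most $d$. Hence the number of length-$l$ walks starting at $i$ is at most $d^l$, and the number of those that additionally end at $j$ is no larger.

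If a more formal argument is preferred, I would do induction on $l$. The base case $l=1$ is immediate since $[A]_{i,j} \in \{0,1\} \leq d$. For the inductive step, write
\[
[A^{l+1}]_{i,j} = \sum_{k \in V} [A^l]_{i,k}\, A_{k,j} \leq d^l \sum_{k \in V} A_{k,j} = d^l \cdot \deg(j) \leq d^{l+1},
\]
using the inductive hypothesis on $[A^l]_{i,k}$ and the maximum-degree bound on $\deg(j)$.

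There is no real obstacle here; the main thing to be careful about is only that the bound uses the \emph{maximum} degree, not the degree of $i$ or $j$, which is why pulling $d^l$ out of the sum and then bounding $\sum_k A_{k,j}$ by $d$ works uniformly. This proposition is the only missing ingredient needed to verify property 1 of Theorem~\ref{expdecay} for the Katz index, since combining it with $f(x,y,l) = [A^l]_{x,y} \leq d^l$ yields $\lambda = d$, and the condition $\lambda < 1/\gamma$ then becomes the familiar convergence requirement $\beta < 1/d$ on the Katz damping factor.
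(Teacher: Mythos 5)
Your induction argument is exactly the paper's proof: the same base case $[A]_{i,j}\leq d$ and the same inductive step pulling $d^l$ out of $\sum_k [A^l]_{i,k}A_{k,j}$ and bounding the column sum by the maximum degree. The preliminary walk-counting interpretation is a correct equivalent viewpoint but adds nothing beyond the induction, so the proposal is correct and essentially identical in approach.
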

\begin{proof}
We prove it by induction. When $l=1$, $A_{i,j} \leq d$ for any $(i,j)$. Thus the base case is correct. Now, assume by induction  that $[A^l]_{i,j} \leq d^{l}$ for any $(i,j)$, we have
\[
[A^{l+1}]_{i,j} = \sum_{k=1}^{|V|} [A^l]_{i,k}A_{k,j}
\leq d^{l} \sum_{k=1}^{|V|} A_{k,j}
\leq d^{l} d 
= d^{l+1}.  \qedhere
\]
\end{proof}

Taking $\lambda = d$, we can see that whenever $d < 1/\beta$, the Katz index will satisfy property 1 in Theorem \ref{expdecay}. In practice, the damping factor $\beta$ is often set to very small values like 5E-4 \cite{liben2007link}, which implies that Katz can be very well approximated from the $h$-hop enclosing subgraph.


\subsection{PageRank}

The rooted PageRank for node $x$ calculates the stationary distribution of a random walker starting at $x$, who iteratively moves to a random neighbor of its current position with probability $\alpha$ or returns to $x$ with probability $1-\alpha$. Let $\pi_x$ denote the stationary distribution vector. Let $[\pi_{x}]_i$ denote the probability that the random walker is at node $i$ under the stationary distribution. 

Let $P$ be the transition matrix with $P_{i,j} = \frac{1}{|\Gamma(v_j)|}$ if $(i,j) \in E$ and $P_{i,j} = 0$ otherwise. Let $\mathbf{e}_x$ be a vector with the $x^{\text{th}}$ element being $1$ and others being $0$. The stationary distribution satisfies
\begin{align}
\pi_x &= \alpha P\pi_x + (1-\alpha)\mathbf{e}_x.
\end{align}

When used for link prediction, the score for $(x,y)$ is given by $[\pi_{x}]_y$ (or $[\pi_{x}]_y + [\pi_{y}]_x$ for symmetry). To show that rooted PageRank is a $\gamma$-decaying heuristic, we introduce the \textit{inverse P-distance} theory \cite{jeh2003scaling}, which states that $[\pi_{x}]_y$ can be equivalently written as follows:
\begin{equation}
[\pi_{x}]_y = (1-\alpha)\sum_{w: x \rightsquigarrow y} P[w]  \alpha^{\text{len}(w)},
\end{equation}
where the summation is taken over all walks $w$ starting at $x$ and ending at $y$ (possibly touching $x$ and $y$ multiple times). For a walk $w = \langle v_0, v_1, \cdots, v_k \rangle$, $\text{len}(w) := |\langle v_0, v_1, \cdots, v_k \rangle|$ is the length of the walk. The term $P[w]$ is defined as $\prod_{i=0}^{k-1} \frac{1}{|\Gamma(v_i)|}$, which can be interpreted as the probability of traveling $w$. Now we have the following theorem.

\begin{theorem}
The rooted PageRank heuristic is a $\gamma$-decaying heuristic which satisfies the properties in Theorem \ref{expdecay}.
\end{theorem}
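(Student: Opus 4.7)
The plan is to apply the inverse P-distance identity to rewrite $[\pi_x]_y$ in $\gamma$-decaying form and then verify the two properties of Theorem~\ref{expdecay}. Grouping the walks in that identity by length $l$ gives
\[
[\pi_x]_y \;=\; (1-\alpha)\sum_{l=1}^{\infty}\alpha^l \sum_{\substack{w:\,x\rightsquigarrow y\\ \text{len}(w)=l}} P[w],
\]
which matches the template with $\eta=1-\alpha$, $\gamma=\alpha$, and $f(x,y,l)=\sum_{w:\,x\rightsquigarrow y,\,\text{len}(w)=l} P[w]$. Since $\alpha\in(0,1)$, $\eta$ is a positive constant and $\gamma\in(0,1)$, so the $\gamma$-decaying form is established.

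For property~1, I would exploit the probabilistic meaning of $P[w]$: it is the probability that a length-$l$ random walk starting at $x$ traces exactly $w$, and the total probability mass over all length-$l$ walks out of $x$ equals $1$. Restricting the sum to those terminating at $y$ immediately yields $f(x,y,l) \leq 1 = 1^l$, so property~1 holds with $\lambda=1<1/\alpha$. In contrast to the Katz analysis, simply bounding walk counts by $d^l$ would not suffice here (it would demand $d<1/\alpha$); the probabilistic interpretation is what allows $\lambda=1$ unconditionally.

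For property~2, I would combine Lemma~\ref{l1} with a degree-preservation argument. Lemma~\ref{l1} already places every walk of length $l\le 2h+1$ between $x$ and $y$ inside $G^h_{x,y}$, so the \emph{set} of admissible walks is visible from the subgraph. The main obstacle is that $P[w]=\prod_{i=0}^{k-1}\tfrac{1}{|\Gamma(v_i)|}$ depends on the degrees of intermediate nodes in the \emph{full} graph $G$, and these may differ from their degrees in $G^h_{x,y}$ near the subgraph's boundary. I would overcome this by restricting to $l\le 2h-1$: any intermediate vertex $v_i$ on such a walk satisfies $d(v_i,x)+d(v_i,y)\le l\le 2h-1$, forcing $d(v_i,x)\le h-1$ or $d(v_i,y)\le h-1$; consequently every $G$-neighbor of $v_i$ lies within distance $h$ of $x$ or $y$ and is therefore contained in $G^h_{x,y}$, so $|\Gamma(v_i)|$ is preserved. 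Hence $f(x,y,l)$ is computable from $G^h_{x,y}$ for every $l$ up to $g(h)=2h-1$, which is linear in $h$ with positive leading coefficient, establishing property~2 and completing the verification.
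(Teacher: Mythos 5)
Your proposal follows the same overall route as the paper: group the inverse P-distance sum by walk length to get $\eta=1-\alpha$, $\gamma=\alpha$, $f(x,y,l)=\sum_{w:\,x\rightsquigarrow y,\,\mathrm{len}(w)=l}P[w]$; bound $f$ by $1$ via the probabilistic interpretation (the paper likewise notes $\sum_{z\in V}f(x,z,l)=1$, hence $f(x,y,l)\le 1<1/\alpha$); and invoke Lemma~\ref{l1} for calculability. Your aside that the Katz-style bound $d^l$ would be useless here is correct and is exactly why both you and the paper switch to the probability-mass argument.

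Where you genuinely depart from the paper is in property~2, and your version is the more careful one. The paper simply asserts that Lemma~\ref{l1} makes $f(x,y,l)$ calculable from $G^h_{x,y}$ for all $l\le 2h+1$, but Lemma~\ref{l1} only guarantees that the \emph{walks} lie inside the subgraph; computing $P[w]=\prod_i 1/|\Gamma(v_i)|$ additionally requires the degrees of the intermediate vertices \emph{in $G$}, and an induced subgraph can undercount the degree of a boundary vertex (one at distance exactly $h$ from, say, $x$ and more than $h$ from $y$ may have neighbors outside $G^h_{x,y}$). Your restriction to $l\le 2h-1$ fixes this cleanly: then every intermediate vertex has $d(v_i,x)+d(v_i,y)\le 2h-1$, so one of the two distances is at most $h-1$, all of its $G$-neighbors fall inside the subgraph, and its degree is preserved. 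This costs only a constant shift in $g(h)$ and leaves the exponential error decay of Theorem~\ref{expdecay} intact (the only cosmetic wrinkle is that $g(h)=2h-1$ has $b=-1\notin\mathbb{N}$ as the theorem is literally stated; taking $g(h)=h$, or noting the proof of Theorem~\ref{expdecay} works verbatim for any affine $g$ with positive slope, disposes of this). In short: same skeleton, but your property-2 argument closes a gap that the paper's own proof leaves open.
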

\begin{proof}
We first write $[\pi_{x}]_y$ in the following form.
\begin{align}
[\pi_{x}]_y = (1-\alpha)\sum_{l=1}^{\infty}\sum_{\substack{w: x \rightsquigarrow y \\ \text{len}(w) = l }} P[w]  \alpha^{l}.
\end{align}
Defining $f(x,y,l) := \sum_{\substack{w: x \rightsquigarrow y \\ \text{len}(w) = l }} P[w]$ leads to the form of a $\gamma$-decaying heuristic. Note that $f(x,y,l)$ is the probability that a random walker starting at $x$ stops at $y$ with exactly $l$ steps, which satisfies $\sum_{z\in V}f(x,z,l) = 1$. Thus, $f(x,y,l) \leq 1 < \frac{1}{\alpha}$ (property 1). According to Lemma \ref{l1}, $f(x,y,l)$ is also calculable from $G^h_{x,y}$ for $l\leq 2h+1$ (property 2).
\end{proof}

\subsection{SimRank}

The SimRank score \cite{jeh2002simrank} is motivated by the intuition that two nodes are similar if their neighbors are also similar. It is defined in the following recursive way: if $x=y$, then $s(x,y) := 1$; otherwise,
\begin{equation}
s(x,y) := \gamma \frac{\sum_{a\in \Gamma(x)}\sum_{b\in \Gamma(y)} s(a,b)}{|\Gamma(x)| \cdot |\Gamma(y)|}
\end{equation}
where $\gamma$ is a constant between 0 and 1. According to \cite{jeh2002simrank}, SimRank has an equivalent definition:
\begin{equation}
s(x,y) = \sum_{w: (x,y) \multimap (z,z)} \!\!\!P[w]  \gamma^{\text{len}(w)},
\end{equation}
where $w: (x,y) \multimap (z,z)$ denotes all simultaneous walks such that one walk starts at $x$, the other walk starts at $y$, and they first meet at \textbf{any} vertex $z$. For a simultaneous walk $w = \langle (v_0, u_0), \cdots, (v_k, u_k) \rangle$, $\text{len}(w) = k$ is the length of the walk. The term $P[w]$ is similarly defined as $\prod_{i=0}^{k-1} \frac{1}{|\Gamma(v_i)||\Gamma(u_i)|}$, describing the probability of this walk. Now we have the following theorem.

\begin{theorem}
SimRank is a $\gamma$-decaying heuristic which satisfies the properties in Theorem \ref{expdecay}.
\end{theorem}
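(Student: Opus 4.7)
The plan is to mirror the PageRank proof by first rewriting SimRank in the explicit $\gamma$-decaying form, then verifying each of the two properties of Theorem~\ref{expdecay} separately.

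First I would regroup the equivalent walk-based definition of SimRank by walk length, writing
\[
s(x,y) = \sum_{l=1}^{\infty} \gamma^l \sum_{\substack{w:(x,y)\multimap (z,z) \\ \mathrm{len}(w)=l}} P[w],
\]
which immediately fits the $\gamma$-decaying template with $\eta = 1$, the same $\gamma$ as in SimRank's definition, and
\[
f(x,y,l) := \sum_{\substack{w:(x,y)\multimap (z,z) \\ \mathrm{len}(w)=l}} P[w].
\]

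For property 1, I would interpret $P[w]$ probabilistically: a simultaneous walk $\langle (v_0,u_0),\ldots,(v_k,u_k)\rangle$ has probability $P[w]$ of being realized by two independent random walkers starting at $x$ and $y$. The summation defining $f(x,y,l)$ is restricted to those simultaneous walks that \emph{first} meet at step $l$ (at any common vertex $z$). The events ``the two walkers first meet at step $l$'' are mutually exclusive across different values of $l$ and across different meeting vertices $z$, so $f(x,y,l)$ is a probability and hence $f(x,y,l) \le 1$. Since $\gamma < 1$, taking $\lambda = 1$ gives $\lambda < 1/\gamma$, and property~1 holds.

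For property 2, I would argue that each simultaneous walk of length $l$ consists of an ordinary walk of length $l$ starting at $x$ together with an ordinary walk of length $l$ starting at $y$, both terminating at a common node $z$. Any intermediate vertex on the $x$-walk at step $i$ lies within distance $i \le l$ of $x$, and similarly any intermediate vertex on the $y$-walk lies within distance $l$ of $y$. Therefore, whenever $l \le h$, every vertex appearing on either walk sits within $h$ hops of $x$ or $y$, and hence lies in $G^h_{x,y}$; the transition probabilities $P[w]$ depend only on degrees of such intermediate vertices, which are preserved in $G^h_{x,y}$ because all their neighbors are also within $h+1 \le h+1$ hops. (This is the same minor degree-preservation observation implicit in the PageRank argument and handled via Lemma~\ref{l1}.) Thus $f(x,y,l)$ is computable from $G^h_{x,y}$ for $l \le h$, so property~2 holds with $g(h)=h$, i.e.\ $a=1,\,b=0$.

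The main obstacle will be the probabilistic justification of property~1: one must be careful that the sum over all meeting vertices $z$ and all first-meeting walks of exactly length $l$ corresponds to pairwise disjoint events (so that the total probability is at most~$1$ rather than growing with the number of candidate meeting points). Once the mutual exclusivity of the ``first meeting at step $l$ at vertex $z$'' events is made explicit, the bound $f(x,y,l)\le 1$ follows immediately, and property~2 is then a straightforward consequence of Lemma~\ref{l1} applied separately to the two halves of the simultaneous walk.
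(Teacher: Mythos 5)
Your proof follows the paper's argument exactly: the same regrouping of the simultaneous-walk sum by length, the same choice of $f(x,y,l)$, the same probabilistic bound $f(x,y,l)\le 1<\frac{1}{\gamma}$, and the same conclusion that $f(x,y,l)$ is computable from $G^h_{x,y}$ for $l\le h$ (i.e., $a=1$, $b=0$) --- you merely make explicit the disjointness and locality arguments that the paper dismisses with ``note that'' and ``it is easy to see.'' One small wrinkle: your degree-preservation remark (``neighbors within $h+1$ hops'') does not by itself place those neighbors inside $G^h_{x,y}$; the clean fix is to observe that $P[w]$ only involves degrees at positions $0,\dots,l-1$ of each walk, i.e., at vertices within $l-1\le h-1$ hops of $x$ or $y$, whose full neighborhoods therefore lie within $h$ hops and hence in $G^h_{x,y}$.
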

\begin{proof}
We write $s(x,y)$ as follows.
\begin{align}
s(x,y) = \sum_{l=1}^{\infty} \sum_{\substack{w: (x,y) \multimap (z,z)\\  \text{len}(w) = l}} P[w]  \gamma^{l},
\end{align}
Defining $f(x,y,l) := \sum_{\substack{w: (x,y) \multimap (z,z)\\  \text{len}(w) = l}} P[w]$ reveals that SimRank is a $\gamma$-decaying heuristic. Note that $f(x,y,l) \leq 1 < \frac{1}{\gamma}$. It is easy to see that $f(x,y,l)$ is also calculable from $G^h_{x,y}$ for $l\leq h$.
\end{proof}

\noindent \textbf{Discussion~~}
There exist several other high-order heuristics based on path counting or random walk \cite{lu2011link} which can be as well incorporated into the $\gamma$-decaying heuristic framework. We omit the analysis here. 
Our results reveal that most high-order heuristics inherently share the same $\gamma$-decaying heuristic form, and thus can be effectively approximated from an $h$-hop enclosing subgraph with exponentially smaller approximation error. We believe the ubiquity of $\gamma$-decaying heuristics is not by accident -- it implies that a successful link prediction heuristic is better to put exponentially smaller weight on structures far away from the target, as remote parts of the network intuitively make little contribution to link existence.
Our results build the foundation for learning heuristics from local subgraphs, as they imply that local enclosing subgraphs already \textbf{contain enough information to learn good graph structure features} for link prediction which is much desired considering learning from the entire network is often infeasible. To summarize, from the small enclosing subgraphs extracted around links, we are able to accurately calculate first and second-order heuristics, and approximate a wide range of high-order heuristics with small errors. Therefore, given adequate feature learning ability of the model used, learning from such enclosing subgraphs is expected to achieve performance at least as good as a wide range of heuristics. There is some related work which empirically verifies that local methods can often estimate PageRank and SimRank well \cite{chen2004local,jia2010local}. Another related theoretical work \cite{bar2008local} establishes a condition of $h$ to achieve some fixed approximation error for ordinary PageRank.

\section{SEAL: An implemetation of the theory using GNN}




In this section, we describe our SEAL framework for link prediction. SEAL does not restrict the learned features to be in some particular forms such as $\gamma$-decaying heuristics, but instead learns general graph structure features for link prediction.
It contains three steps: 1) enclosing subgraph extraction, 2) node information matrix construction, and 3) GNN learning.
Given a network, we aim to learn automatically a ``heuristic'' that best explains the link formations. Motivated by the theoretical results, this function takes local enclosing subgraphs around links as input, and output how likely the links exist. To learn such a function, we train a graph neural network (GNN) over the enclosing subgraphs. Thus, the first step in SEAL is to extract enclosing subgraphs for a set of sampled positive links (observed) and a set of sampled negative links (unobserved) to construct the training data. 

A GNN typically takes $(A,X)$ as input, where $A$ (with slight abuse of notation) is the adjacency matrix of the input enclosing subgraph, $X$ is the \textit{node information matrix} each row of which corresponds to a node's feature vector. The second step in SEAL is to construct the node information matrix $X$ for each enclosing subgraph. This step is crucial for training a successful GNN link prediction model. In the following, we discuss this key step. The node information matrix $X$ in SEAL has three components: structural node labels, node embeddings and node attributes.

\subsection{Node labeling}
The first component in $X$ is each node's structural label. A node labeling is function $f_l: V \rightarrow \mathbb{N}$ which assigns an integer label $f_l(i)$ to every node $i$ in the enclosing subgraph. The purpose is to use different labels to \textbf{mark nodes' different roles} in an enclosing subgraph:
1) The center nodes $x$ and $y$ are the target nodes between which the link is located. 2) Nodes with different relative positions to the center have different structural importance to the link. 
A proper node labeling should mark such differences. If we do not mark such differences, GNNs will not be able to \textbf{tell where are the target nodes between which a link existence should be predicted}, and lose structural information. 

Our node labeling method is derived from the following criteria: 1) The two target nodes $x$ and $y$ always have the distinctive label ``$1$''. 2) Nodes $i$ and $j$ have the same label if $d(i,x) = d(j,x)$ and $d(i,y) = d(j,y)$. The second criterion is because, intuitively, a node $i$'s topological position within an enclosing subgraph can be described by its \textit{radius} with respect to the two center nodes, namely $(d(i,x), d(i,y))$. Thus, we let nodes on the same orbit have the same label, so that the node labels can reflect nodes' relative positions and structural importance within subgraphs.

Based on the above criteria, we propose a \textit{Double-Radius Node Labeling} (DRNL) as follows. First, assign label 1 to $x$ and $y$. Then, for any node $i$ with $(d(i,x), d(i,y)) = (1,1)$, assign label $f_l(i)=2$. Nodes with radius $(1,2)$ or $(2,1)$ get label 3. Nodes with radius $(1,3)$ or $(3,1)$ get 4. Nodes with $(2,2)$ get 5. Nodes with $(1,4)$ or $(4,1)$ get 6. Nodes with $(2,3)$ or $(3,2)$ get 7. So on and so forth. In other words, we iteratively assign larger labels to nodes with a larger radius w.r.t. both center nodes, where the label $f_l(i)$ and the double-radius $(d(i,x),d(i,y))$ satisfy

1) if $d(i,x) + d(i,y) \neq d(j,x) + d(j,y)$, then $d(i,x) + d(i,y) < d(j,x) + d(j,y) \Leftrightarrow  f_l(i) < f_l(j)$;

2) if $d(i,x) + d(i,y) = d(j,x) + d(j,y)$, then $d(i,x)d(i,y) < d(j,x)d(j,y) \Leftrightarrow  f_l(i) < f_l(j)$. 
 
One advantage of DRNL is that it has a perfect hashing function
\begin{align}
f_l(i) = 1 + min(d_x, d_y) + (d / 2)[(d / 2) + (d \% 2) - 1],
\label{hashing}
\end{align}
where $d_x := d(i,x)$, $d_y := d(i,y)$, $d := d_x + d_y$, $(d / 2)$ and $(d \% 2)$ are the integer quotient and remainder of $d$ divided by $2$, respectively. This perfect hashing allows fast closed-form computations.

For nodes with $d(i,x)=\infty$ or $d(i,y)=\infty$, we give them a null label 0. Note that DRNL is not the only possible way of node labeling, but we empirically verified its better performance than no labeling and other naive labelings. We discuss more about node labeling in Appendix \ref{appendix:labeling}. After getting the labels, we use their one-hot encoding vectors to construct $X$.

\subsection{Incorporating latent and explicit features}
Other than the structural node labels, the node information matrix $X$ also provides an opportunity to include latent and explicit features. By concatenating each node's embedding/attribute vector to its corresponding row in $X$, we can make SEAL simultaneously learn from all three types of features.

Generating the node embeddings for SEAL is nontrivial. Suppose we are given the observed network $G=(V,E)$, a set of sampled positive training links $E_p \subseteq E$, and a set of sampled negative training links $E_n$ with $E_n \cap E = \varnothing$. If we directly generate node embeddings on $G$, the node embeddings will record the link existence information of the training links (since $E_p \subseteq E$). We observed that GNNs can quickly find out such link existence information and optimize by only fitting this part of information. This results in bad generalization performance in our experiments.
Our trick is to temporally add $E_n$ into $E$, and generate the embeddings on $G' = (V, E \cup E_n)$. This way, the positive and negative training links will have the same link existence information recorded in the embeddings, so that GNN cannot classify links by only fitting this part of information. 
We empirically verified the much improved performance of this trick to SEAL. We name this trick \textit{negative injection}. 

We name our proposed framework \textbf{SEAL} (learning from Subgraphs, Embeddings and Attributes for Link prediction), emphasizing its ability to jointly learn from three types of features.


\section{Experimental results}
We conduct extensive experiments to evaluate SEAL. Our results show that SEAL is a superb and robust framework for link prediction, achieving unprecedentedly strong performance on various networks. We use AUC and average precision (AP) as evaluation metrics. We run all experiments for 10 times and report the average AUC results and standard deviations. We leave the the AP and time results in Appendix \ref{appendix:results}.
SEAL is flexible with what GNN or node embeddings to use. Thus, we choose a recent architecture DGCNN \cite{zhang2018end} as the default GNN, and node2vec \cite{grover2016node2vec} as the default embeddings. 
The code and data are available at https://github.com/muhanzhang/SEAL.

\noindent\textbf{Datasets~~} The eight datasets used are: USAir, NS, PB, Yeast, C.ele, Power, Router, and E.coli (please see Appendix \ref{appendix:datasets} for details). We randomly remove 10\% existing links from each dataset as positive testing data. Following a standard manner of learning-based link prediction, we randomly sample the same number of nonexistent links (unconnected node pairs) as negative testing data. We use the remaining 90\% existing links as well as the same number of additionally sampled nonexistent links to construct the training data.


\noindent\textbf{Comparison to heuristic methods~~} We first compare SEAL with methods that only use graph structure features.  We include eight popular heuristics (shown in Appendix \ref{appendix:features}, Table \ref{heuristics}): common neighbors (CN), Jaccard, preferential attachment (PA), Adamic-Adar (AA), resource allocation (RA), Katz, PageRank (PR), and SimRank (SR). We additionally include Ensemble (ENS) which trains a logistic regression classifier on the eight heuristic scores. We also include two heuristic learning methods: Weisfeiler-Lehman graph kernel (WLK) \cite{shervashidze2011weisfeiler} and WLNM \cite{zhang2017weisfeiler}, which also learn from (truncated) enclosing subgraphs. We omit path ranking methods \cite{lao2010relational} as well as other recent methods which are specifically designed for knowledge graphs or recommender systems \cite{nickel2014reducing,monti2017geometric}. As all the baselines only use graph structure features, we restrict SEAL to not include any latent or explicit features. In SEAL, the hop number $h$ is an important hyperparameter. Here, we select $h$ only from $\{1,2\}$, since on one hand we empirically verified that the performance typically does not increase after $h\geq 3$, which validates our theoretical results that the most useful information is within local structures. On the other hand, even $h=3$ sometimes results in very large subgraphs if a hub node is included. This raises the idea of sampling nodes in subgraphs, which we leave to future work.
The selection principle is very simple: If the second-order heuristic AA outperforms the first-order heuristic CN on 10\% validation data, then we choose $h=2$; otherwise we choose $h=1$. For datasets PB and E.coli, we consistently use $h=1$ to fit into the memory. We include more details about the baselines and hyperparameters in Appendix \ref{appendix:baselines}.

\newcolumntype{F}{>{\centering\arraybackslash}p{2.5em}}
\newcolumntype{G}{>{\raggedright\arraybackslash}p{2.8em}}
\newcolumntype{H}{>{\centering\arraybackslash}p{5.2em}}

\begin{table*}[th]
\centering
\caption{\small Comparison with heuristic methods (AUC).}
\resizebox{1\textwidth}{!}{
\begin{tabular}{@{}l |c*{12}{c}}
\toprule
\textbf{Data} &CN     & Jaccard   & PA     & AA     & RA     & Katz    & PR     & SR  &ENS   & WLK            & WLNM            & \textbf{SEAL}     \\ \midrule
USAir  & 93.80$\pm$1.22 & 89.79$\pm$1.61 & 88.84$\pm$1.45 & 95.06$\pm$1.03 & 95.77$\pm$0.92 & 92.88$\pm$1.42 & 94.67$\pm$1.08 & 78.89$\pm$2.31 & 88.96$\pm$1.44 & \textbf{96.63}$\pm$0.73 & 95.95$\pm$1.10 & \textbf{96.62}$\pm$0.72 \\
NS     & 94.42$\pm$0.95 & 94.43$\pm$0.93 & 68.65$\pm$2.03 & 94.45$\pm$0.93 & 94.45$\pm$0.93 & 94.85$\pm$1.10 & 94.89$\pm$1.08 & 94.79$\pm$1.08 & 97.64$\pm$0.25 & 98.57$\pm$0.51 & 98.61$\pm$0.49 & \textbf{98.85}$\pm$0.47 \\
PB     & 92.04$\pm$0.35 & 87.41$\pm$0.39 & 90.14$\pm$0.45 & 92.36$\pm$0.34 & 92.46$\pm$0.37 & 92.92$\pm$0.35 & 93.54$\pm$0.41 & 77.08$\pm$0.80 & 90.15$\pm$0.45 & 93.83$\pm$0.59 & 93.49$\pm$0.47 & \textbf{94.72}$\pm$0.46 \\
Yeast  & 89.37$\pm$0.61 & 89.32$\pm$0.60 & 82.20$\pm$1.02 & 89.43$\pm$0.62 & 89.45$\pm$0.62 & 92.24$\pm$0.61 & 92.76$\pm$0.55 & 91.49$\pm$0.57 & 82.36$\pm$1.02 & 95.86$\pm$0.54 & 95.62$\pm$0.52 & \textbf{97.91}$\pm$0.52 \\
C.ele  & 85.13$\pm$1.61 & 80.19$\pm$1.64 & 74.79$\pm$2.04 & 86.95$\pm$1.40 & 87.49$\pm$1.41 & 86.34$\pm$1.89 & \textbf{90.32}$\pm$1.49 & 77.07$\pm$2.00 & 74.94$\pm$2.04 & 89.72$\pm$1.67 & 86.18$\pm$1.72 & \textbf{90.30}$\pm$1.35 \\
Power  & 58.80$\pm$0.88 & 58.79$\pm$0.88 & 44.33$\pm$1.02 & 58.79$\pm$0.88 & 58.79$\pm$0.88 & 65.39$\pm$1.59 & 66.00$\pm$1.59 & 76.15$\pm$1.06 & 79.52$\pm$1.78 & 82.41$\pm$3.43 & 84.76$\pm$0.98 & \textbf{87.61}$\pm$1.57 \\
Router & 56.43$\pm$0.52 & 56.40$\pm$0.52 & 47.58$\pm$1.47 & 56.43$\pm$0.51 & 56.43$\pm$0.51 & 38.62$\pm$1.35 & 38.76$\pm$1.39 & 37.40$\pm$1.27 & 47.58$\pm$1.48 & 87.42$\pm$2.08 & 94.41$\pm$0.88 & \textbf{96.38}$\pm$1.45 \\
E.coli & 93.71$\pm$0.39 & 81.31$\pm$0.61 & 91.82$\pm$0.58 & 95.36$\pm$0.34 & 95.95$\pm$0.35 & 93.50$\pm$0.44 & 95.57$\pm$0.44 & 62.49$\pm$1.43 & 91.89$\pm$0.58 & 96.94$\pm$0.29 & 97.21$\pm$0.27 & \textbf{97.64}$\pm$0.22 \\
\bottomrule
\end{tabular}
}
\label{t3}
\end{table*}

Table \ref{t3} shows the results. Firstly, we observe that methods which learn from enclosing subgraphs (WLK, WLNM and SEAL) generally perform much better than predefined heuristics. This indicates that the learned ``heuristics'' are better at capturing the network properties than manually designed ones. Among learning-based methods, SEAL has the best performance, demonstrating GNN's superior graph feature learning ability over graph kernels and fully-connected neural networks. From the results on Power and Router, we can see that although existing heuristics perform similarly to random guess, learning-based methods still maintain high performance. This suggests that we can even discover new ``heuristics'' for networks where no existing heuristics work.

\begin{wraptable}{L}{0.61\textwidth}
\centering
\caption{\small Comparison with latent feature methods (AUC).}
\resizebox{0.602\textwidth}{!}{
\begin{tabular}{@{}l |c*{7}{c}}
\toprule
\textbf{Data}   & MF    & SBM   & N2V   & LINE  & SPC   & VGAE  & \textbf{SEAL } \\ \midrule
USAir  & 94.08$\pm$0.80 & 94.85$\pm$1.14 & 91.44$\pm$1.78 & 81.47$\pm$10.71 & 74.22$\pm$3.11 & 89.28$\pm$1.99 & \textbf{97.09}$\pm$0.70 \\ 
NS     & 74.55$\pm$4.34 & 92.30$\pm$2.26 & 91.52$\pm$1.28 & 80.63$\pm$1.90  & 89.94$\pm$2.39 & 94.04$\pm$1.64 & \textbf{97.71}$\pm$0.93 \\
PB     & 94.30$\pm$0.53 & 93.90$\pm$0.42 & 85.79$\pm$0.78 & 76.95$\pm$2.76  & 83.96$\pm$0.86 & 90.70$\pm$0.53 & \textbf{95.01}$\pm$0.34 \\
Yeast  & 90.28$\pm$0.69 & 91.41$\pm$0.60 & 93.67$\pm$0.46 & 87.45$\pm$3.33  & 93.25$\pm$0.40 & 93.88$\pm$0.21 & \textbf{97.20}$\pm$0.64 \\
C.ele  & 85.90$\pm$1.74 & 86.48$\pm$2.60 & 84.11$\pm$1.27 & 69.21$\pm$3.14  & 51.90$\pm$2.57 & 81.80$\pm$2.18 & \textbf{89.54}$\pm$2.04 \\
Power  & 50.63$\pm$1.10 & 66.57$\pm$2.05 & 76.22$\pm$0.92 & 55.63$\pm$1.47  & \textbf{91.78}$\pm$0.61 & 71.20$\pm$1.65 & 84.18$\pm$1.82 \\
Router & 78.03$\pm$1.63 & 85.65$\pm$1.93 & 65.46$\pm$0.86 & 67.15$\pm$2.10  & 68.79$\pm$2.42 & 61.51$\pm$1.22 & \textbf{95.68}$\pm$1.22 \\
E.coli & 93.76$\pm$0.56 & 93.82$\pm$0.41 & 90.82$\pm$1.49 & 82.38$\pm$2.19  & 94.92$\pm$0.32 & 90.81$\pm$0.63 & \textbf{97.22}$\pm$0.28 \\ \bottomrule
\end{tabular}
}
\label{t4}
\end{wraptable}

\noindent\textbf{Comparison to latent feature methods~~} Next we compare SEAL with six state-of-the-art latent feature methods: matrix factorization (MF), stochastic block model (SBM) \cite{airoldi2008mixed}, node2vec (N2V) \cite{grover2016node2vec}, LINE \cite{tang2015line}, spectral clustering (SPC), and variational graph auto-encoder (VGAE) \cite{kipf2016variational}. Among them, VGAE uses a GNN too. Please note the difference between VGAE and SEAL: VGAE uses a node-level GNN to learn node embeddings that best reconstruct the network, while SEAL uses a graph-level GNN to classify enclosing subgraphs. Therefore, VGAE still belongs to latent feature methods. For SEAL, we additionally include the 128-dimensional node2vec embeddings in the node information matrix $X$. Since the datasets do not have node attributes, explicit features are not included. 

Table \ref{t4} shows the results. As we can see, SEAL shows significant improvement over latent feature methods. One reason is that SEAL learns from both graph structures and latent features simultaneously, thus augmenting those methods that only use latent features.
We observe that SEAL with node2vec embeddings outperforms pure node2vec by large margins. This implies that network embeddings alone may not be able to capture the most useful link prediction information located in the local structures. 
It is also interesting that compared to SEAL without node2vec embeddings (Table \ref{t3}), joint learning does not always improve the performance.
More experiments and discussion are included in Appendix~\ref{appendix:results}.

\vspace{-5pt}
\section{Conclusions}
Learning link prediction heuristics automatically is a new field. In this paper, we presented theoretical justifications for learning from local enclosing subgraphs. In particular, we proposed a $\gamma$-decaying theory to unify a wide range of high-order heuristics and prove their approximability from local subgraphs. Motivated by the theory, we proposed a novel link prediction framework, SEAL, to simultaneously learn from local enclosing subgraphs, embeddings and attributes based on graph neural networks. Experimentally we showed that SEAL achieved unprecedentedly strong performance by comparing to various heuristics, latent feature methods, and network embedding algorithms. We hope SEAL can not only inspire link prediction research, but also open up new directions for other relational machine learning problems such as knowledge graph completion and recommender systems.



\subsubsection*{Acknowledgments}
The work is supported in part by the III-1526012 and SCH-1622678 grants from the National Science Foundation and grant 1R21HS024581 from the National Institute of Health. 

{\small
\bibliography{references}}

\newpage
\begin{appendices}

\section{More about the three types of features for link prediction}\label{appendix:features}
In this section, we discuss more about the difference among the three types commonly used features for link prediction: graph structure features, latent features, and explicit features.

\textbf{Graph structure features} locate inside the observed node and edge structures of the network, which can be directly observed and computed. Link prediction heuristics belong to graph structure features. We show eight popular heuristics in Table \ref{heuristics}. In addition to link prediction heuristics, node centrality scores (degree, closeness, betweenness, PageRank, eigenvector, hubs etc.), graphlets, network motifs etc. all belong to graph structure features. Although effective in many domains, these predefined graph structure features are handcrafted -- they only capture a small set of structure patterns, lacking the ability to express general structure patterns underlying different networks. Considering deep neural networks' success in feature learning, a natural question to ask is whether we can automatically learn such features, no longer relying on predefined ones.

Graph structure features are inductive, meaning that these features are not associated with a particular node or network. For example, the common neighbor heuristic between any pair of nodes $x$ and $y$ is consistently calculated by counting the number of their common one-hop neighbors, invariant to where $x$ and $y$ are located. Thus, graph structure features are transferrable to new nodes and new networks. This is in contrast to latent features, which are often transductive -- the changing of network structure will require a complete retraining to get the latent features again.

\newcolumntype{L}[1]{>{\raggedright\let\newline\\\arraybackslash\hspace{0pt}}m{#1}}
\begin{table}[h]\renewcommand{\arraystretch}{1.5}
\centering
\small
\caption{Popular heuristics for link prediction, see \cite{liben2007link} for details.}
\begin{tabular}{lll}
\toprule
Name& Formula &Order\\ 
\midrule
common neighbors & $|\Gamma(x) \cap \Gamma(y)|$& first\\
Jaccard & $\frac{|\Gamma(x) \cap \Gamma(y)| }{  |\Gamma(x) \cup \Gamma(y)|}$ & first\\ 
preferential attachment & $|\Gamma(x)| \cdot |\Gamma(y)|$& first\\ 
Adamic-Adar & $\sum_{z\in \Gamma(x) \cap \Gamma(y)} \frac{1}{\log|\Gamma(z)|}$ & second\\ 
resource allocation & $\sum_{z\in \Gamma(x) \cap \Gamma(y)} \frac{1}{|\Gamma(z)|}$ & second\\ 
Katz & $\sum_{l = 1}^{\infty} \beta^l {|\text{walks}^{\langle l\rangle}(x,y)|}$ & high\\ 
PageRank & $[\pi_{x}]_y + [\pi_{y}]_x$ & high\\ 
SimRank & $\gamma \frac{\sum_{a\in \Gamma\!(x)}\!\!\sum_{b\in \Gamma\!(y)} \!\!\text{score}(a,b)}{|\Gamma(x)| \cdot |\Gamma(y)|}$ & high\\
\bottomrule
\end{tabular}
 \flushleft
\footnotesize Notes: $\Gamma(x)$ denotes the neighbor set of vertex $x$. $\beta<1$ is a damping factor. $|\text{walks}^{\langle l\rangle}(x,y)|$ counts the number of length-$l$ walks between $x$ and $y$. $[\pi_{x}]_y$ is the stationary distribution probability of $y$ under the random walk from $x$ with restart, see \cite{brin2012reprint}. SimRank score is a recursive definition. We exclude those heuristics which are simple variants of the above or are proven to be meaningless for large graphs (e.g., commute time \cite{luxburg2010getting}).
\label{heuristics}
\end{table}

\textbf{Latent features} are latent properties or representations of nodes, often obtained by factorizing a specific matrix derived from a network, such as the adjacency matrix or the Laplacian matrix. Through factorization, a low-dimensional embedding is learned for each node. Latent features focus more on global properties and long range effects, because the network's matrix is treated as a whole during factorization. Latent features cannot capture structural similarities between nodes \cite{ribeiro2017struc2vec}, and usually need an extremely large dimension to express some simple heuristics \cite{nickel2014reducing}. 
Latent features are also transductive. They cannot be transferred to new nodes or new networks. They are also less interpretable than graph structure features.

Network embedding methods \cite{perozzi2014deepwalk,tang2015line,grover2016node2vec,hamilton2017inductive,lai2017prune,duran2017learning} have gained great popularity recently. They learn low-dimensional representations for nodes too.
Recently, it is shown  that network embedding methods (including DeepWalk \cite{perozzi2014deepwalk}, LINE \cite{tang2015line}, and node2vec \cite{grover2016node2vec}) implicitly factorize some matrix representation of a network~\cite{qiu2017network}. For example, DeepWalk approximately factorizes $\log (\text{vol}(G)(\frac{1}{T}\sum_{r=1}^T(D^{-1}A)^r)D^{-1}) - \log(b)$, where $A$ is the adjacency matrix of the network $G$, $D$ is the diagonal degree matrix, $T$ is skip-gram's window size, and $b$ is the number of negative samples. For LINE and node2vec, there also exist such matrices. Since network embedding methods also factorize matrix representations of networks, we may regard them as learning more expressive latent features through factorizing some more informative matrices.

\textbf{Explicit features} are often given by continuous or discrete node attribute vectors. In principle, any side information about the network other than its structure can be seen as explicit features. For example, in citation networks, word distributions are explicit features of document nodes. In social networks, a user's profile information is also explicit feature (however, their friendship information belongs to graph structure features).

These three types of features are largely orthogonal to each other. Many papers have considered using them together for link prediction \cite{koren2008factorization,rendle2010factorization,nickel2014reducing,zhao2017leveraging} to improve the performance of single-feature-based methods.

\section{More discussion about node labeling}\label{appendix:labeling}
The necessity of structural node labels for enclosing subgraphs is because, unlike ordinary graphs, enclosing subgraphs intrinsically have a directionality. The center of an enclosing subgraph are two nodes $x$ and $y$ between which the target link is located. Outward from the center, other nodes have larger and larger distance to $x$ and $y$. Node labeling is to mark such structural differences thus providing additional structural information to facilitate GNN training.

When designing a node labeling for enclosing subgraphs, we always want to ensure that the target nodes $x$ and $y$ have a distinct label so that GNN can distinguish the target link to predict from other edges. Secondly, we want the node labels to reflect nodes' relative positions in their enclosing subgraph. This relative position can be intuitively described by a node $i$'s double-radius with respect to $x$ and $y$, i.e., $(d(i,x),d(i,y))$.

We restate our Double-Radius Node Labeling (DRNL) algorithm here. First, assign label 1 to $x$ and $y$. Then, for any node $i$ with $(d(i,x), d(i,y)) = (1,1)$, assign label $f_l(i)=2$. Nodes with double-radius $(1,2)$ or $(2,1)$ get label 3. Nodes with double-radius $(1,3)$ or $(3,1)$ get 4. Nodes with $(2,2)$ get 5. Nodes with $(1,4)$ or $(4,1)$ get 6. Nodes with $(2,3)$ or $(3,2)$ get 7. So on and so forth. Our DRNL not only satisfies the above criteria, but also attains the additional benefits that for nodes $i$ and $j$:

1) if $d(i,x) + d(i,y) \neq d(j,x) + d(j,y)$, then $d(i,x) + d(i,y) < d(j,x) + d(j,y) \Leftrightarrow  f_l(i) < f_l(j)$;

2) if $d(i,x) + d(i,y) = d(j,x) + d(j,y)$, then $d(i,x)d(i,y) < d(j,x)d(j,y) \Leftrightarrow  f_l(i) < f_l(j)$.  

That is, the magnitude of node labels also reflects their distance to the center. Nodes with smaller arithmetic mean distance to the target nodes get smaller labels. If two nodes have the same arithmetic mean distance, the node with a smaller geometric mean distance to the target nodes gets a smaller label. Note that these additional benefits will not be available under one-hot encoding of node labels, since the magnitude information will be lost after one-hot encoding. However, such a labeling is potentially useful when node labels are directly used for training, or used to rank the nodes. Furthermore, our node labeling has a perfect hashing (\ref{hashing}) which allows closed-form computation.

We present a lookup table for DRNL and an example labeled subgraph in Figure \ref{nl}. Note that when calculating $d(i,x)$, we temporally remove $y$ from the subgraph, and vice versa. This is because we aim to use the pure distance between $i$ and $x$ without the influence of $y$. If we do not remove $y$, $d(i,x)$ will be upper bounded by $d(i,y) + d(x,y)$, obscuring the ``true distance'' between $i$ and $x$. 

\begin{figure}[tp]
\centering
\subfloat{\includegraphics[width=0.5\textwidth]{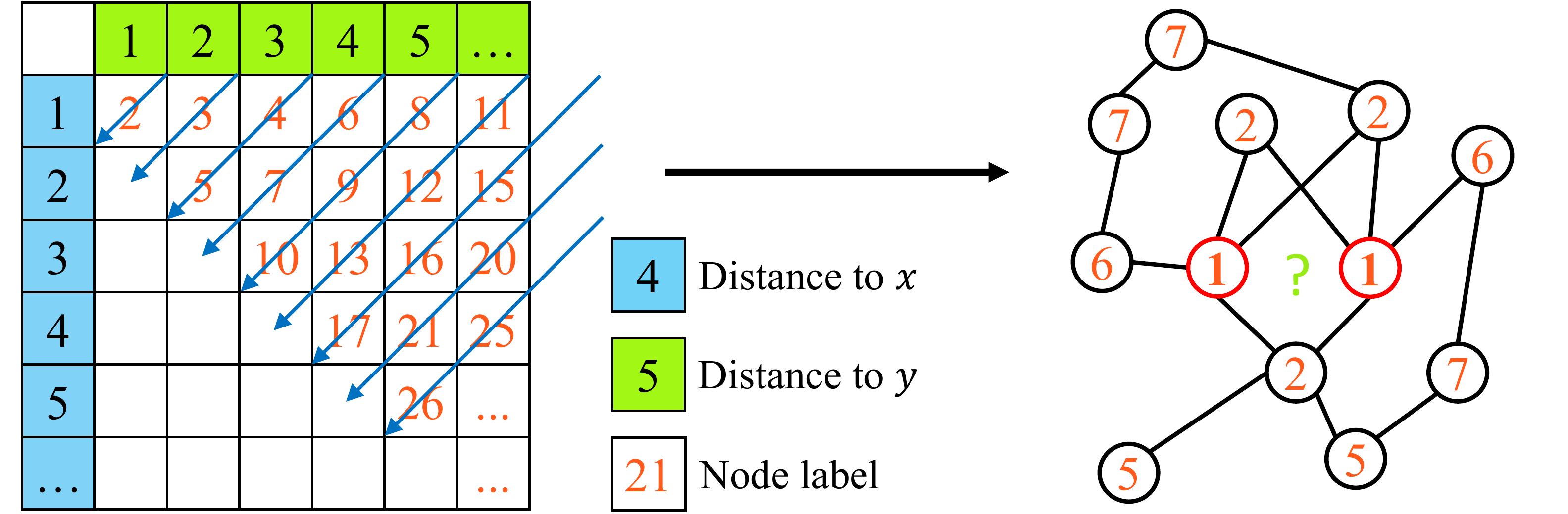}}
\caption{\small Double-Radius Node Labeling.}
\label{nl}
\end{figure}

Our node labeling algorithm is different from the Weisfeiler-Lehman algorithm used in WLNM \cite{zhang2017weisfeiler}. In WLNM, node labeling is for defining a node order in adjacency matrices -- the labels are not really input to machine learning models. To rank nodes with least ties, the node labels should be as fine as possible in WLNM. In comparison, the node labels in SEAL need not be very fine, as their purpose is for indicating nodes' different roles within the enclosing subgraph, not for ranking nodes. 
In addition, node labels in SEAL are encoded into node information matrices and input to machine learning models.

\section{Dataset details}\label{appendix:datasets}

USAir \cite{vladimir2006} is a network of US Air lines with 332 nodes and 2,126 edges. The average node degree is 12.81. NS \cite{newman2006finding} is a collaboration network of researchers in network science with 1,589 nodes and 2,742 edges. The average node degree is 3.45. PB \cite{ackland2005mapping} is a network of US political blogs with 1,222 nodes and 16,714 edges. The average node degree is 27.36. Yeast \cite{von2002comparative} is a protein-protein interaction network in yeast with 2,375 nodes and 11,693 edges. The average node degree is 9.85. C.ele \cite{watts1998collective} is a neural network of C. elegans with 297 nodes and 2,148 edges. The average node degree is 14.46. Power \cite{watts1998collective} is an electrical grid of western US with 4,941 nodes and 6,594 edges. The average node degree is 2.67. Router \cite{spring2004measuring} is a router-level Internet with 5,022 nodes and 6,258 edges. The average node degree is 2.49. E.coli \cite{zhang2018beyond} is a pairwise reaction network of metabolites in E. coli with 1,805 nodes and 14,660 edges. The average node degree is 12.55.

\section{Additional details about baselines} \label{appendix:baselines}
\noindent\textbf{Hyperparameters of heuristic and latent feature methods~~}
Most hyperparameters are inherited from the original paper of each method. For Katz, we set the damping factor $\beta$ to 0.001. For PageRank, we set the damping factor $\alpha$ to 0.85. For SimRank, we set $\gamma$ to 0.8.
For stochastic block model (SBM), we use the implementation of \cite{aicher2015learning} using a latent group number 12. For matrix factorization (MF), we use the libFM \cite{rendle2012factorization} software with the default parameters. 
For node2vec, LINE, and spectral clustering, we first generate 128-dimensional embeddings from the observed networks with default parameters of each software. Then, we use the Hadamard product of two nodes' embeddings as a link's embedding as suggested in \cite{grover2016node2vec}, and train a logistic regression model with Liblinear \cite{fan2008liblinear} using automatic hyperparameter selection. For VGAE, we use its default setting.

\noindent\textbf{WLNM~~} Weisfeiler-Lehman Neural Machine (WLNM) \cite{zhang2017weisfeiler} is a recent link prediction method that learns general graph structure features. It achieves state-of-the-art performance on various networks, outperforming all handcrafted heuristics. WLNM has three steps: enclosing subgraph extraction, subgraph pattern encoding, and neural network training. In the enclosing subgraph extraction step: for each node pair $(x,y)$, WLNM iteratively extracts $x$ and $y$'s one-hop neighbors, two-hop neighbors, and so on, until the enclosing subgraph has \textbf{more than} $K$ vertices, where $K$ is a user-defined integer. In the subgraph pattern encoding step, WLNM uses the Weisfeiler-Lehman algorithm to define an order for nodes within each enclosing subgraph, so that the neural network can read different subgraphs' nodes in a consistent order and learn meaningful patterns. To unify the sizes of the enclosing subgraphs, after getting the vertex order, the last few vertices are deleted so that all the truncated enclosing subgraphs have the same size $K$. These truncated enclosing subgraphs are reordered and their fixed-size adjacency matrices are fed into the fully-connected neural network to train a link prediction model. Due to the truncation, WLNM cannot consistently learn from each link's full $h$-hop neighborhood. The loss of structural information limits WLNM's performance and restrict it from learning complete $h$-order graph structure features. Following \cite{zhang2017weisfeiler}, we use $K=10$ (the best performing $K$) in our experiments.

\begin{table}[h]
\caption{\small Comparison of different link prediction methods}
\centering
\resizebox{0.7\textwidth}{!}{
\begin{tabular}{cccccc}
\toprule
  & Heuristics  & Latent features &WLK & WLNM & SEAL                  \\ \midrule
Graph structure features  & Yes  & No  & Yes  & Yes  & Yes \\
Learn from full $h$-hop  & No  & n/a & Yes  & No  & Yes  \\
Latent/explicit features  & No & Yes  &  No  & No  & Yes  \\
Model  & n/a  & LR/inner product & SVM  &  NN  &  GNN\\
\bottomrule
\end{tabular}}
\label{comparison}
\end{table}

\noindent\textbf{WLK~~} Weisfeiler-Lehman graph kernel (WLK) \cite{shervashidze2011weisfeiler} is a state-of-the-art graph kernel. Graph kernels make kernel machines feasible for graph classification by defining some positive semidefinite graph similarity scores. Most graph kernels measure graph similarity by decomposing graphs into small substructures and adding up the pair-wise similarities between these components. Common types of substructures include walks \citep{vishwanathan2010graph,sugiyama2015halting}, subgraphs \citep{costa2010fast,kriege2012subgraph}, paths \citep{borgwardt2005shortest}, and subtrees \citep{shervashidze2011weisfeiler,neumann2016propagation}. WLK is based on counting common rooted subtrees between two graphs. In our experiments, we train a SVM on the WL kernel matrix. We feed the same enclosing subgraphs as in SEAL to WLK. We search the subtree depth from $\{0,1,2,3,4,5\}$ on 10\% validation links. WLK does not support continuous node information, but supports integer node labels. Thus, we feed the same structural node labels from (\ref{hashing}) to WLK too. 

We compare the characteristics of different link prediction methods in Table \ref{comparison}.

\section{Configuration details of SEAL}

In the experiments, we use Deep Graph Convolutional Neural Network (DGCNN) \cite{zhang2018end} as the default GNN engine of SEAL. DGCNN is a recent GNN architecture for graph classification. It has consistently good performance on various benchmark datasets with a single network architecture (avoid hyperparameter tweaking).
DGCNN is equipped with propagation-based graph convolution layers and a novel graph aggregation layer, called SortPooling. We illustrate the overall architecture of DGCNN in Figure \ref{dgcnn}. Given the adjacency matrix $A \in \{1,0\}^{n\times n}$ and the node information matrix $X \in \mathbb{R}^{n\times c}$ of an enclosing subgraph, DGCNN uses the following graph convolution layer:
\begin{equation}
Z = f( \tilde{D}^{-1} \tilde{A} XW),
\label{gconv}
\end{equation}
where $\tilde{A} = A + I$, $\tilde{D}$ is a diagonal degree matrix with $\tilde{D}_{i,i} = \sum_j \tilde{A}_{i,j}$, $W \in \mathbb{R}^{c\times c'}$ is a matrix of trainable graph convolution parameters, $f$ is an element-wise nonlinear activation function, and $Z \in \mathbb{R}^{n\times c'}$ are the new node states. The mechanism behind (\ref{gconv}) is that the initial node states $X$ are first applied a linear transformation by multiplying $W$, and then propagated to neighboring nodes through the propagation matrix $\tilde{D}^{-1} \tilde{A}$. After graph convolution, the $i^{\text{th}}$ row of $Z$ becomes:
\begin{equation}
Z_i = f\big( \frac{1}{|\Gamma(i)|+1} [X_i W + \sum_{j\in \Gamma(i)} X_j W] \big),
\label{gconv2}
\end{equation}
which summarizes the node information as well as the first-order structure pattern from $i$'s neighbors. DGCNN stacks multiple graph convolution layers (\ref{gconv}) and concatenates each layer's node states as the final node states, in order to extract multi-hop node features.

\begin{figure}[tp]
\centering
\subfloat{\includegraphics[width=0.9\textwidth]{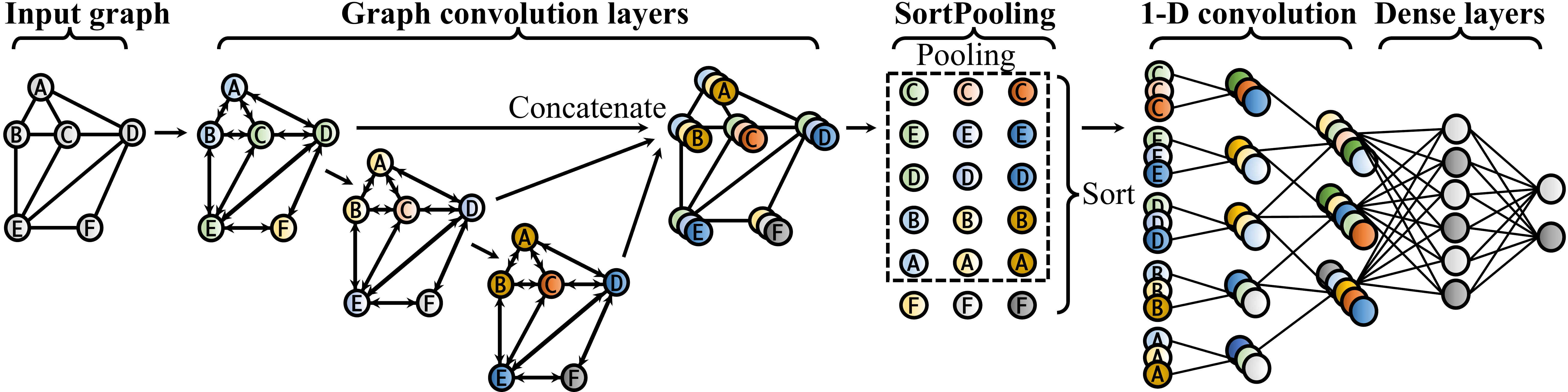}}
\caption{\small The DGCNN architecture.}
\label{dgcnn}
\end{figure}

A graph aggregation layer constructs a graph-level feature vector from individual nodes' final states, which is used for graph classification. The most widely used aggregation operation is summing, i.e., nodes' final states after graph convolutions are summed up as the graph's representation. However, the averaging effect of summing might lose much individual nodes' information as well as the topological information of the graph. DGCNN uses a novel SortPooling layer, which sorts the final node states according to the last graph convolution layer's output to achieve an isomorphism invariant node ordering \cite{zhang2018end}. A max-$k$ pooling operation is then used to unify the sizes of the sorted representations of different graphs, which enables training a traditional 1-D CNN on the node sequence.

We use the default setting of DGCNN, i.e., four graph convolution layers as in (\ref{gconv}) with 32,32,32,1 channels, a SortPooling layer (with $k$ such that 60\% graphs have nodes less than $k$), two 1-D convolution layers (16 and 32 output channels) and a dense layer (128 neurons), see \cite{zhang2018end}. We train DGCNN on enclosing subgraphs for 50 epochs, and select the model with the smallest loss on the 10\% validation data to predict the testing links.

Note that, in any positive training link's enclosing subgraph, we should always remove the edge between the two target nodes before feeding it into a graph classification model. This is because this edge will contain the link existence information, which is not available in any testing link's enclosing subgraph.

%

%

\section{Additional results}\label{appendix:results}

In this section, we show the additional experimental results. We first use 90\% observed links as training links and 10\% as testing links following the main paper's experiments. The average precision (AP) comparison results with heuristic methods are shown in Table \ref{t6}. The AP comparison results with latent feature methods are shown in Table \ref{t8}. We can see that our proposed SEAL shows great performance improvement over all baselines in both AUC and AP.

\begin{table*}[th]
\centering
\caption{\small Comparison with heuristic methods (AP), 90\% training links.}
\resizebox{1\textwidth}{!}{
\begin{tabular}{@{}l |c*{12}{c}}
\toprule
\textbf{Data} &CN     & Jaccard   & PA     & AA     & RA     & Katz    & PR     & SR  &ENS   & WLK            & WLNM            & \textbf{SEAL}     \\ \midrule
USAir  & 93.45$\pm$1.19 & 87.54$\pm$2.07 & 91.22$\pm$1.28 & 95.36$\pm$1.00 & 96.27$\pm$0.79 & 94.07$\pm$1.18 & 95.08$\pm$1.16 & 69.24$\pm$2.61 & 91.33$\pm$1.27 & \textbf{96.82}$\pm$0.84 & 95.95$\pm$1.13 & \textbf{96.80}$\pm$0.55 \\
NS     & 94.39$\pm$0.96 & 94.44$\pm$0.93 & 72.85$\pm$1.88 & 94.46$\pm$0.93 & 94.46$\pm$0.93 & 95.05$\pm$1.08 & 95.11$\pm$1.04 & 94.98$\pm$1.02 & 97.68$\pm$0.36 & 98.79$\pm$0.40 & 98.81$\pm$0.49 & \textbf{99.06}$\pm$0.37 \\
PB     & 91.47$\pm$0.45 & 84.78$\pm$0.71 & 89.33$\pm$0.72 & 92.36$\pm$0.46 & 92.37$\pm$0.57 & 93.07$\pm$0.46 & 92.97$\pm$0.77 & 64.33$\pm$0.95 & 89.35$\pm$0.71 & 93.34$\pm$0.89 & 92.69$\pm$0.64 & \textbf{94.31}$\pm$0.56 \\
Yeast  & 89.34$\pm$0.62 & 89.15$\pm$0.67 & 85.36$\pm$0.85 & 89.53$\pm$0.63 & 89.55$\pm$0.63 & 95.23$\pm$0.39 & 95.47$\pm$0.43 & 93.42$\pm$0.64 & 85.54$\pm$0.85 & 96.82$\pm$0.35 & 96.40$\pm$0.38 & \textbf{98.33}$\pm$0.37 \\
C.ele  & 82.62$\pm$1.51 & 77.06$\pm$2.55 & 75.49$\pm$1.86 & 86.46$\pm$1.43 & 87.10$\pm$1.53 & 85.93$\pm$1.69 & \textbf{89.56}$\pm$1.57 & 68.61$\pm$2.31 & 75.69$\pm$1.86 & 88.96$\pm$2.06 & 85.08$\pm$2.05 & 89.48$\pm$1.85 \\
Power  & 58.77$\pm$0.88 & 58.77$\pm$0.89 & 51.93$\pm$1.16 & 58.76$\pm$0.89 & 58.76$\pm$0.90 & 79.82$\pm$0.91 & 80.56$\pm$0.91 & 77.02$\pm$0.93 & 83.63$\pm$1.37 & 83.02$\pm$3.19 & 87.16$\pm$0.77 & \textbf{89.55}$\pm$1.29 \\
Router & 56.39$\pm$0.53 & 55.84$\pm$0.80 & 69.03$\pm$0.95 & 56.50$\pm$0.51 & 56.51$\pm$0.50 & 64.52$\pm$0.81 & 64.91$\pm$0.85 & 58.82$\pm$1.12 & 69.25$\pm$0.96 & 86.59$\pm$2.23 & 93.53$\pm$1.09 & \textbf{96.23}$\pm$1.71 \\
E.coli & 93.49$\pm$0.38 & 82.42$\pm$0.59 & 94.04$\pm$0.33 & 96.05$\pm$0.25 & 96.72$\pm$0.25 & 94.83$\pm$0.30 & 96.41$\pm$0.33 & 55.01$\pm$0.86 & 94.11$\pm$0.33 & 97.25$\pm$0.42 & 97.50$\pm$0.23 & \textbf{98.03}$\pm$0.20 \\ \bottomrule
\end{tabular}
}
\label{t6}
\end{table*}

\begin{table*}[!th]
\centering
\caption{\small Comparison with latent feature methods (AP), 90\% training links.}
\resizebox{0.63\textwidth}{!}{
\begin{tabular}{@{}l |c*{7}c}
\toprule
\textbf{Data}   & MF    & SBM   & N2V   & LINE  & SPC   & VGAE  & \textbf{SEAL } \\ \midrule
USAir  & 94.36$\pm$0.79 & 95.08$\pm$1.10 & 89.71$\pm$2.97 & 79.70$\pm$11.76 & 78.07$\pm$2.92 & 89.27$\pm$1.29 & \textbf{97.13}$\pm$0.80 \\
NS     & 78.41$\pm$3.85 & 92.13$\pm$2.36 & 94.28$\pm$0.91 & 85.17$\pm$1.65  & 90.83$\pm$2.16 & 95.83$\pm$1.04 & \textbf{98.12}$\pm$0.77 \\
PB     & 93.56$\pm$0.71 & 93.35$\pm$0.52 & 84.79$\pm$1.03 & 78.82$\pm$2.71  & 86.57$\pm$0.61 & 90.38$\pm$0.72 & \textbf{94.55}$\pm$0.43 \\
Yeast  & 92.01$\pm$0.47 & 92.73$\pm$0.44 & 94.90$\pm$0.38 & 90.55$\pm$2.39  & 94.63$\pm$0.56 & 95.19$\pm$0.36 & \textbf{97.95}$\pm$0.35 \\
C.ele  & 83.63$\pm$2.09 & 84.66$\pm$2.95 & 83.12$\pm$1.90 & 67.51$\pm$2.72  & 62.07$\pm$2.40 & 78.32$\pm$3.49 & \textbf{88.81}$\pm$2.32 \\
Power  & 53.50$\pm$1.22 & 65.48$\pm$1.85 & 81.49$\pm$0.86 & 56.66$\pm$1.43  & \textbf{91.00}$\pm$0.58 & 75.91$\pm$1.56 & 86.69$\pm$1.50 \\
Router & 82.59$\pm$1.38 & 84.67$\pm$1.89 & 68.66$\pm$1.49 & 71.92$\pm$1.53  & 73.53$\pm$1.47 & 70.36$\pm$0.85 & \textbf{95.66}$\pm$1.23 \\
E.coli & 95.59$\pm$0.31 & 95.30$\pm$0.27 & 90.87$\pm$1.48 & 86.45$\pm$1.82  & 96.08$\pm$0.37 & 92.77$\pm$0.65 & \textbf{97.83}$\pm$0.20 \\ \bottomrule
\end{tabular}
}
\label{t8}
\end{table*}
\vspace{5pt}

To evaluate SEAL's scalability, we show its single-GPU inference time performance in Table \ref{t9}. As we can see, SEAL has good scalability. For networks with over 1E7 potential links, SEAL took less than an hour to make all the predictions. One possible way to further scale SEAL to social networks with millions of users is to first use some simple heuristics such as common neighbors to filter out most unlikely links and then use SEAL to make further recommendations. Another way is to restrict the candidate friend recommendations to be those who are at most 2 or 3 hops away from the target user, which will vastly reduce the number of candidate links to infer for each user and thus further increase the scalability.

\begin{table}[!th]
\centering
\caption{\small Inference time of SEAL.}
\label{t9}
\resizebox{0.99\textwidth}{!}{
\begin{tabular}{l|cccccccc}
\toprule
			& USAir            & NS           & PB         &Yeast       & C.ele          & Power   & Router  & E.coli        \\ \midrule
Number of potential links & 5.49E+04	&1.26E+06	&7.46E+05	&2.82E+06	&4.40E+04	&1.22E+07	&1.26E+07	&1.39E+06 \\
Inference time per link (s) & 6.05E-04	&2.55E-04	&2.04E-04	&3.96E-04	&4.13E-04	&1.35E-04	&2.13E-04 & 2.40E-04 \\
Inference time for all potential links (s) & 31 &	321	&146	&1106	&16	&1640	&2681	&328 \\
\bottomrule
\end{tabular}}
\end{table}
\vspace{5pt}

\begin{table*}[t]
\centering
\caption{\small Comparison with heuristic methods (AUC), 50\% training links.}
\resizebox{1\textwidth}{!}{
\begin{tabular}{@{}l |c*{12}{c}}
\toprule
\textbf{Data} &CN     & Jaccard   & PA     & AA     & RA     & Katz    & PR     & SR  &ENS   & WLK            & WLNM            & \textbf{SEAL}     \\ \midrule
USAir  & 87.93$\pm$0.43 & 84.82$\pm$0.52 & 87.59$\pm$0.50 & 88.61$\pm$0.40 & 88.73$\pm$0.39 & 88.91$\pm$0.51 & 90.57$\pm$0.62 & 81.09$\pm$0.59 & 87.71$\pm$0.50 & 91.93$\pm$0.71 & 91.42$\pm$0.95 & \textbf{93.23}$\pm$1.46 \\
NS     & 77.13$\pm$0.75 & 77.12$\pm$0.75 & 65.87$\pm$0.83 & 77.13$\pm$0.75 & 77.13$\pm$0.75 & 82.30$\pm$0.93 & 82.32$\pm$0.94 & 81.60$\pm$0.87 & 87.19$\pm$1.04 & 87.27$\pm$1.71 & 87.61$\pm$1.63 & \textbf{90.88}$\pm$1.18 \\
PB     & 86.74$\pm$0.17 & 83.40$\pm$0.24 & 89.52$\pm$0.19 & 87.06$\pm$0.17 & 87.01$\pm$0.18 & 91.25$\pm$0.22 & 92.23$\pm$0.21 & 81.82$\pm$0.43 & 89.54$\pm$0.19 & 92.54$\pm$0.33 & 90.93$\pm$0.23 & \textbf{93.75}$\pm$0.18 \\
Yeast  & 82.59$\pm$0.28 & 82.52$\pm$0.28 & 81.61$\pm$0.39 & 82.63$\pm$0.27 & 82.62$\pm$0.27 & 88.87$\pm$0.28 & 89.35$\pm$0.29 & 88.50$\pm$0.26 & 81.84$\pm$0.38 & 91.15$\pm$0.35 & 92.22$\pm$0.32 & \textbf{93.90}$\pm$0.54 \\
C.ele  & 72.29$\pm$0.82 & 69.75$\pm$0.86 & 73.81$\pm$0.97 & 73.37$\pm$0.80 & 73.42$\pm$0.82 & 79.99$\pm$0.59 & \textbf{84.95}$\pm$0.58 & 76.05$\pm$0.80 & 74.11$\pm$0.96 & 83.29$\pm$0.89 & 75.72$\pm$1.33 & 81.16$\pm$1.52 \\
Power  & 53.38$\pm$0.22 & 53.38$\pm$0.22 & 46.79$\pm$0.69 & 53.38$\pm$0.22 & 53.38$\pm$0.22 & 57.34$\pm$0.51 & 57.34$\pm$0.52 & 56.16$\pm$0.45 & 62.70$\pm$0.95 & 63.44$\pm$1.29 & 64.09$\pm$0.76 & \textbf{65.84}$\pm$1.10 \\
Router & 52.93$\pm$0.28 & 52.93$\pm$0.28 & 55.06$\pm$0.44 & 52.94$\pm$0.28 & 52.94$\pm$0.28 & 54.39$\pm$0.38 & 54.44$\pm$0.38 & 54.38$\pm$0.42 & 55.06$\pm$0.44 & 71.25$\pm$4.37 & 86.10$\pm$0.52 & \textbf{86.64}$\pm$1.58 \\
E.coli & 86.55$\pm$0.57 & 81.70$\pm$0.42 & 90.80$\pm$0.40 & 87.66$\pm$0.56 & 87.81$\pm$0.56 & 89.81$\pm$0.46 & 92.96$\pm$0.43 & 73.70$\pm$0.53 & 90.88$\pm$0.40 & 92.38$\pm$0.46 & 92.81$\pm$0.30 & \textbf{94.18}$\pm$0.41 \\ \bottomrule
\end{tabular}
}
\label{t10}
\end{table*}

\begin{table*}[!th]
\centering
\caption{\small Comparison with latent feature methods (AUC), 50\% training links.}
\resizebox{0.63\textwidth}{!}{
\begin{tabular}{@{}l |c*{7}c}
\toprule
\textbf{Data}   & MF    & SBM   & N2V   & LINE  & SPC   & VGAE  & \textbf{SEAL } \\ \midrule
USAir  & 91.28$\pm$0.71 & 91.68$\pm$0.66 & 84.63$\pm$1.58 & 72.51$\pm$12.19 & 65.42$\pm$3.41 & 90.09$\pm$0.94 & \textbf{93.36}$\pm$0.67 \\
NS     & 62.95$\pm$1.03 & 81.91$\pm$1.55 & 80.29$\pm$1.20 & 65.96$\pm$1.60  & 79.63$\pm$1.34 & \textbf{93.38}$\pm$1.07 & 87.73$\pm$1.08 \\
PB     & 93.27$\pm$0.16 & 92.96$\pm$0.20 & 79.29$\pm$0.67 & 75.53$\pm$1.78  & 78.06$\pm$1.00 & 90.57$\pm$0.69 & \textbf{93.79}$\pm$0.25 \\
Yeast  & 84.99$\pm$0.49 & 88.32$\pm$0.38 & 90.18$\pm$0.17 & 79.44$\pm$7.90  & 89.73$\pm$0.28 & \textbf{93.51}$\pm$0.41 & 93.30$\pm$0.51 \\
C.ele  & 78.49$\pm$1.73 & 81.83$\pm$1.44 & 75.53$\pm$1.23 & 59.46$\pm$7.08  & 47.30$\pm$0.91 & 81.51$\pm$1.69 & \textbf{82.33}$\pm$2.31 \\
Power  & 50.53$\pm$0.60 & 57.53$\pm$0.76 & 55.40$\pm$0.84 & 53.44$\pm$1.83  & 56.51$\pm$0.94 & \textbf{70.34}$\pm$0.84 & 61.88$\pm$1.31 \\
Router & 77.49$\pm$0.64 & 74.66$\pm$1.52 & 62.45$\pm$0.81 & 62.43$\pm$3.10  & 53.87$\pm$1.33 & 62.91$\pm$0.95 & \textbf{85.08}$\pm$1.53 \\
E.coli & 91.75$\pm$0.33 & 90.60$\pm$0.58 & 84.73$\pm$0.81 & 74.50$\pm$11.10 & 92.00$\pm$0.50 & 91.27$\pm$0.42 & \textbf{94.17}$\pm$0.36 \\ \bottomrule
\end{tabular}
}
\label{t11}
\end{table*}

\begin{table*}[!h]
\centering
\caption{\small Comparison with heuristic methods (AP), 50\% training links.}
\resizebox{1\textwidth}{!}{
\begin{tabular}{@{}l |c*{12}{c}}
\toprule
\textbf{Data} &CN     & Jaccard   & PA     & AA     & RA     & Katz    & PR     & SR  &ENS   & WLK            & WLNM            & \textbf{SEAL}     \\ \midrule
USAir  & 87.60$\pm$0.45 & 80.35$\pm$1.26 & 90.29$\pm$0.45 & 89.39$\pm$0.39 & 89.54$\pm$0.36 & 91.29$\pm$0.36 & 91.93$\pm$0.50 & 73.04$\pm$0.84 & 90.47$\pm$0.45 & 93.34$\pm$0.51 & 92.54$\pm$0.81 & \textbf{94.11}$\pm$1.08 \\
NS     & 77.11$\pm$0.74 & 77.10$\pm$0.75 & 68.56$\pm$0.71 & 77.14$\pm$0.74 & 77.14$\pm$0.75 & 82.69$\pm$0.88 & 82.73$\pm$0.90 & 81.86$\pm$0.88 & 86.77$\pm$0.88 & 89.97$\pm$1.02 & 90.10$\pm$1.11 & \textbf{92.21}$\pm$0.97 \\
PB     & 85.90$\pm$0.16 & 78.59$\pm$0.43 & 88.83$\pm$0.25 & 87.24$\pm$0.18 & 87.05$\pm$0.21 & 91.54$\pm$0.16 & 91.92$\pm$0.25 & 70.78$\pm$0.69 & 88.87$\pm$0.25 & 92.34$\pm$0.34 & 91.01$\pm$0.20 & \textbf{93.42}$\pm$0.19 \\
Yeast  & 82.55$\pm$0.27 & 82.16$\pm$0.39 & 84.45$\pm$0.34 & 82.68$\pm$0.27 & 82.66$\pm$0.27 & 92.22$\pm$0.21 & 92.54$\pm$0.23 & 90.98$\pm$0.30 & 84.77$\pm$0.34 & 93.55$\pm$0.46 & 93.93$\pm$0.20 & \textbf{95.32}$\pm$0.38 \\
C.ele  & 69.82$\pm$0.74 & 64.04$\pm$1.02 & 74.20$\pm$0.65 & 73.40$\pm$0.77 & 73.33$\pm$0.96 & 79.94$\pm$0.79 & \textbf{84.15}$\pm$0.86 & 68.45$\pm$1.17 & 74.62$\pm$0.64 & 83.20$\pm$0.90 & 76.12$\pm$1.08 & 81.01$\pm$1.51 \\
Power  & 53.37$\pm$0.22 & 53.35$\pm$0.24 & 51.44$\pm$0.59 & 53.37$\pm$0.23 & 53.37$\pm$0.23 & 57.63$\pm$0.52 & 57.61$\pm$0.56 & 56.19$\pm$0.49 & 61.81$\pm$0.71 & 63.97$\pm$1.81 & 66.43$\pm$0.85 & \textbf{68.14}$\pm$1.02 \\
Router & 52.91$\pm$0.27 & 52.71$\pm$0.23 & 65.20$\pm$0.42 & 52.94$\pm$0.27 & 52.93$\pm$0.27 & 60.87$\pm$0.26 & 61.01$\pm$0.30 & 58.27$\pm$0.51 & 65.38$\pm$0.42 & 75.49$\pm$3.43 & 86.12$\pm$0.68 & \textbf{87.79}$\pm$1.71 \\
E.coli & 86.42$\pm$0.54 & 78.71$\pm$0.40 & 93.25$\pm$0.26 & 89.01$\pm$0.49 & 89.21$\pm$0.48 & 91.93$\pm$0.35 & 94.68$\pm$0.28 & 63.05$\pm$0.48 & 93.35$\pm$0.27 & 94.51$\pm$0.32 & 94.47$\pm$0.21 & \textbf{95.58}$\pm$0.28 \\ \bottomrule
\end{tabular}
}
\label{t12}
\end{table*}

\begin{table*}[!h]
\centering
\caption{\small Comparison with latent feature methods (AP), 50\% training links.}
\resizebox{0.63\textwidth}{!}{
\begin{tabular}{@{}l |c*{7}c}
\toprule
\textbf{Data}   & MF    & SBM   & N2V   & LINE  & SPC   & VGAE  & \textbf{SEAL } \\ \midrule
USAir  & 92.33$\pm$0.90 & 92.79$\pm$0.44 & 82.51$\pm$2.08 & 71.75$\pm$11.85 & 70.18$\pm$2.16 & 89.86$\pm$1.23 & \textbf{94.15}$\pm$0.54 \\
NS     & 66.62$\pm$0.89 & 84.14$\pm$1.18 & 86.01$\pm$0.87 & 71.53$\pm$0.97  & 81.16$\pm$1.26 & \textbf{95.31}$\pm$0.80 & 90.42$\pm$0.79 \\
PB     & 92.53$\pm$0.33 & 92.64$\pm$0.17 & 77.21$\pm$0.97 & 78.72$\pm$1.24  & 81.30$\pm$0.84 & 90.57$\pm$0.79 & \textbf{93.40}$\pm$0.33 \\
Yeast  & 87.28$\pm$0.57 & 90.65$\pm$0.24 & 92.45$\pm$0.23 & 83.06$\pm$9.70  & 92.07$\pm$0.27 & 94.71$\pm$0.25 & \textbf{94.83}$\pm$0.38 \\
C.ele  & 77.82$\pm$1.59 & 80.52$\pm$0.92 & 72.91$\pm$1.74 & 60.71$\pm$6.26  & 55.31$\pm$0.93 & 79.54$\pm$1.60 & \textbf{81.99}$\pm$2.18 \\
Power  & 52.45$\pm$0.63 & 57.23$\pm$0.85 & 60.83$\pm$0.68 & 55.11$\pm$3.49  & 59.10$\pm$1.06 & \textbf{74.86}$\pm$0.43 & 65.28$\pm$1.25 \\
Router & 81.25$\pm$0.56 & 77.77$\pm$1.13 & 66.77$\pm$0.57 & 64.87$\pm$6.76  & 59.13$\pm$3.22 & 71.25$\pm$0.66 & \textbf{86.70}$\pm$1.59 \\
E.coli & 94.04$\pm$0.36 & 93.17$\pm$0.35 & 85.41$\pm$0.94 & 75.98$\pm$14.45 & 94.14$\pm$0.29 & 93.41$\pm$0.32 & \textbf{95.67}$\pm$0.24 \\ \bottomrule
\end{tabular}
}
\label{t13}
\end{table*}

Next, we redo the comparisons under 50\%--50\% train/test split. We randomly remove 50\% existing links as positive testing links and use the remaining 50\% existing links as positive training links. The same number of negative training and testing links are sampled from the nonexistent links as well. The AUC results are shown in Table \ref{t10} and \ref{t11}. The AP results are shown in Table \ref{t12} and \ref{t13}.

The results are consistent with the 90\%--10\% split setting.
As we can see, SEAL is still the best among all methods in general. The performance gains over heuristic methods are even larger compared to the 90\%-10\% split. This indicates that SEAL is able to learn good heuristics even when the network is very incomplete. SEAL also shows more clear advantages over WLNM. On the other hand, we observe that VGAE becomes a strong baseline when network is sparser by achieving the best AUC results on 3 out of 8 datasets. It is thus interesting to study whether replacing the node2vec embeddings in SEAL with the VGAE embeddings can further improve the performance. We leave it to future work.

\newpage
We further conduct experiments with the setting of the node2vec paper \cite{grover2016node2vec} on five networks: arXiv (18,722 nodes and 198,110 edges) \cite{leskovec2015snap}, Facebook (4,039 nodes and 88,234 edges) \cite{leskovec2015snap}, BlogCatalog (10,312 nodes, 333,983 edges and 39 attributes) \cite{zafaranisocial}, Wikipedia (4,777 nodes, 184,812 edges and 40 attributes) \cite{mahoney2011large}, and Protein-Protein Interactions (PPI) (3,890 nodes, 76,584 edges and 50 attributes) \cite{stark2006biogrid}. For each network, 50\% of random links are removed and used as testing data, while keeping the remaining network connected. For Facebook and arXiv, all remained links are used as positive training data. For PPI, BlogCatalog and Wikipedia, we sample 10,000 remained links as positive training data. We compare SEAL ($h=1$, 10 training epochs) with node2vec, LINE, SPC, VGAE, and WLNM ($K=10$). For node2vec, we use the parameters provided in \cite{grover2016node2vec} if available. For SEAL and VGAE, the node attributes are used since only these two methods support explicit features. 

Table \ref{t14} shows the results. As we can see, SEAL consistently outperforms all embedding methods. Especially on the last three networks, SEAL (with node2vec embeddings) outperforms pure node2vec by large margins. These results indicate that in many cases, embedding methods alone cannot capture the most useful link prediction information, while effectively combining the power of different types of features results in much better performance. SEAL also consistently outperforms WLNM.

\begin{table}[!th]
\centering
\caption{Comparison with network embedding methods (AUC and standard deviation, OOM: out of memory).}
\label{t14}
\resizebox{0.75\textwidth}{!}{
\begin{tabular}{l|cccccc}
\toprule
			& N2V            & LINE           & SPC         &VGAE       & WLNM           & SEAL           \\ \midrule
arXiv       & 96.18$\pm$0.40 & 84.64$\pm$0.03 & 87.00$\pm$0.14 &OOM & 99.19$\pm$0.03  & \textbf{99.40}$\pm$0.14  \\ 
Facebook    & 99.05$\pm$0.07 & 89.63$\pm$0.06 & 98.59$\pm$0.11 &98.21$\pm$0.22& 99.24$\pm$0.03  & \textbf{99.40}$\pm$0.08  \\ 
BlogCatalog & 85.97$\pm$1.56 & 90.92$\pm$2.05 & 96.74$\pm$0.31 &OOM & 96.55$\pm$0.08  &\textbf{98.10}$\pm$0.60  \\ 
Wikipedia   & 76.59$\pm$2.06 & 74.44$\pm$0.66 & 99.54$\pm$0.04 &89.74$\pm$0.18 & 99.05$\pm$0.03 & \textbf{99.63}$\pm$0.05 \\ 
PPI         & 70.31$\pm$0.79 & 72.82$\pm$1.53 & 92.27$\pm$0.22 &85.86$\pm$0.43 & 88.79$\pm$0.38 & \textbf{93.52}$\pm$0.37 \\
\bottomrule
\end{tabular}}
\end{table}

\end{appendices}

\end{document}